\def\eqref#1{equation~\ref{#1}}
\def\1{\bm{1}}
\DeclareMathAlphabet{\mathsfit}{\encodingdefault}{\sfdefault}{m}{sl}
\SetMathAlphabet{\mathsfit}{bold}{\encodingdefault}{\sfdefault}{bx}{n}
\newtheorem{lemma}{Lemma}
\newtheorem{theorem}{Theorem}
\newtheorem{assumption}{Assumption}
\title{The Choice of Divergence: A Neglected Key to Mitigating Diversity Collapse in Reinforcement Learning with Verifiable Reward}
\author{
  \centering 
  \parbox{0.95\textwidth}{ 
    \centering
    \begin{tabular}{c}
      \textbf{Long Li}$^{1,2,\dagger}$, \textbf{Zhijian Zhou}$^{1,5,\dagger}$, \textbf{Jiaran Hao}$^{2}$, \textbf{Jason Klein Liu}$^{2}$, \textbf{Yanting Miao}$^{4}$, \textbf{Wei Pang}$^{4}$,\\ 
      \textbf{Xiaoyu Tan}$^{2}$, \textbf{Wei Chu}$^{2}$, \textbf{Zhe Wang}$^{3}$, \textbf{Shirui Pan}$^{3}$, \textbf{Chao Qu}$^{1,6, \ast}$\thanks{Corresponding author. $^\dagger$ Equal contribution.} , \textbf{Yuan Qi}$^{1,6}$ \\[0.6em] 
       $^1$Fudan University \qquad $^2$INFLY TECH \qquad $^3$Griffith University \\
       $^4$University of Waterloo \qquad $^5$Shanghai Innovation Institute \\
       $^6$Shanghai Academy of Artificial Intelligence for Science \\[0.4em] 
      \texttt{seamoke111@gmail.com}
    \end{tabular}
  }
}
\begin{document}

\maketitle

\begin{abstract}

A central paradox in fine-tuning Large Language Models (LLMs) with Reinforcement Learning with Verifiable Reward (RLVR) is the frequent degradation of multi-attempt performance (Pass@k) despite improvements in single-attempt accuracy (Pass@1). This is often accompanied by catastrophic forgetting, where models lose previously acquired skills. Despite numerous proposed methods, the community's focus on the standard reverse-KL divergence has led to a surprising oversight: the potential of alternative f-divergences as a proactive solution has been largely unexamined. We argue that standard RLVR objectives—both those using the mode-seeking reverse-KL divergence and those forgoing a divergence term entirely—lack a crucial mechanism for knowledge retention. The reverse-KL actively accelerates this decay by narrowing the policy, while its absence provides no safeguard against the model drifting from its diverse knowledge base. We propose a fundamental shift in perspective: using the divergence term itself as the solution. Our framework, Diversity-Preserving Hybrid RL (DPH-RL), leverages mass-covering f-divergences (like forward-KL and JS-divergence) to function as a `rehearsal mechanism'. By continuously referencing the initial policy, this approach forces the model to maintain broad solution coverage. Math and SQL generation experiments show that DPH-RL surpasses the GRPO baseline by improving both in-domain Pass@1 and Pass@k scores, and effectively prevents catastrophic forgetting on out-of-domain tasks. Additionally, DPH-RL is more training-efficient because it computes f-divergence using generator functions, requiring only sampling from the initial policy and no online reference model. Our work highlights a crucial, overlooked axis for improving RLVR, demonstrating that the proper selection of a divergence measure is a powerful tool for building more general and diverse reasoning models. Code and data are available at: \url{https://github.com/seamoke/DPH-RL}.

\end{abstract}

\begin{figure}[htbp]
  \centering
  \includegraphics[width=1.0\textwidth]{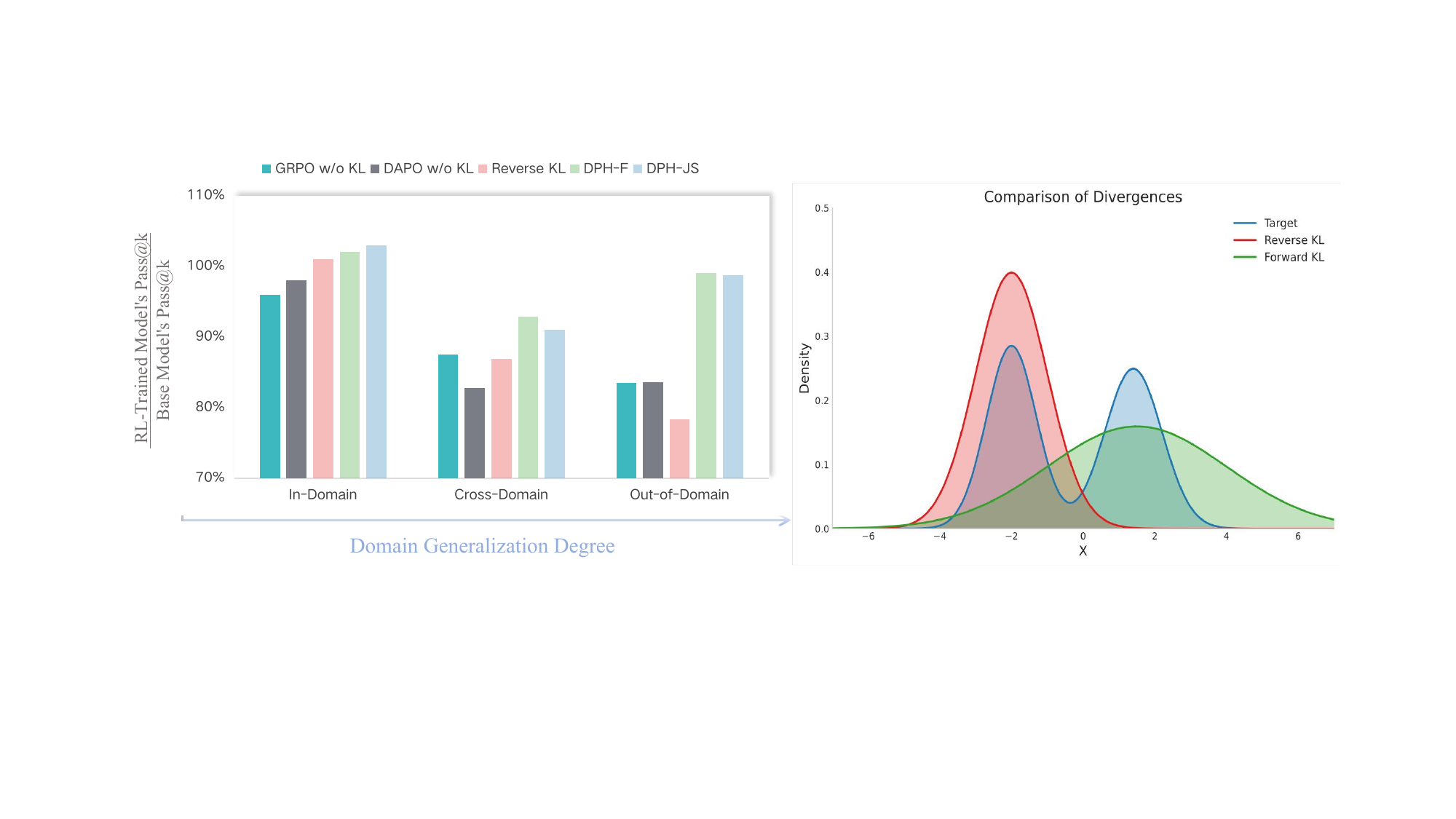}
  \caption{The left panel evaluates the performance gap in Pass@k between the RL-trained model and the Base Model across test sets with varying degrees of divergence from the training data. The right panel visualizes the distributions of reverse-KL and forward-KL.}
  \label{fig:main}
\end{figure}
\vspace{-10pt}

\section{Introduction}
\vspace{-10pt}

Reinforcement Learning with Verifiable Rewards (RLVR) has recently shown significant success in enhancing the mathematical and coding capabilities of Large Language Models (LLMs)~\citep{gpt4,qwen2.5,qwen3,meta2024llama3,guo2025deepseek}. Despite this progress, a critical paradox has emerged: while RLVR-tuned models consistently improve the probability of generating a correct solution in a single attempt (Pass@1), their performance when multiple attempts are permitted (Pass@k) often stagnates or even degrades compared to their base models~\citep{yue2025does}. This discrepancy suggests that RLVR, rather than teaching novel reasoning, may instead overfit the model to known solution paths, thereby sacrificing diversity.

To counteract the narrowing of the model's output distribution~\citep{wang20258020rulehighentropyminority, yue2025does}, which reduces solution diversity, researchers have explored several parallel strategies. One major line of work attributes the phenomenon to entropy collapse and encourages exploration by controlling entropy~\citep{cui2025entropymechanismreinforcementlearning,cheng2025reasoningexplorationentropyperspective,liang2025pass1selfplayvariationalproblem}. Another distinct approach focuses on directly optimizing the Pass@k metric~\citep{mahdavi2025beyond,walder2025pass}, which serves as a more direct proxy for solution diversity than entropy. A third strategy focuses on training setups, such as using extra data~\citep{yan2504learning,dong2025rl}, fine-tuning hyperparameters~\citep{he2025skyworkopenreasoner1,dapo}, or employing a hybrid RL+supervised fine-tuned (SFT) training paradigm~\citep{liu2025acereasonnemotron11advancingmath}. Alongside these methods, a fourth, more fundamental component of the RL objective---the Kullback-Leibler (KL) divergence term used to constrain policy updates---also plays a crucial role. 

However, while the first three avenues have been extensively explored, the influence of the KL divergence term has remained largely under-examined. The community has almost universally adopted the standard \textbf{reverse-KL divergence} ($D_\text{KL}(\pi_{\theta} || \pi_{ref}) = \mathbb{E}_{\pi_{\theta}} \log \frac{\pi_\theta}{\pi_{ref}}$)~\citep{kazemnejad2024vineppo,schulman2017proximal,deepseekmath}, whose well-established \textbf{mode-seeking} nature~\citep{bishop2006pattern} theoretically forces the policy to converge on a single high-probability solution. Our experiments confirm this: a reverse-KL objective yields a single solution style, while a forward-KL objective generates multiple styles (Figure~\ref{fig:demo}). Additionally, we observe that models trained with a reverse-KL term or \emph{no KL} term at all suffer from a decline in Pass@k and catastrophic forgetting. As shown in the left panel of Figure~\ref{fig:main}, both GRPO without KL~\citep{deepseekmath}, DAPO~\citep{dapo}, and Reverse KL show a significant decline in Pass@k performance. In out-of-domain scenarios, the Reverse KL approach performs even worse. These models only correctly answer about 85\% of the problems they were previously able to solve (Figure~\ref{fig:keep_exp}). This phenomenon is a known challenge in sequential learning paradigms with neural networks, not a fundamental flaw of RL, which in principle does not suffer from this in tabular cases \citep{hamadanian2023online}.

Despite these significant issues, prior RLVR methods have almost exclusively relied on the standard \textbf{reverse-KL divergence}, while the exploration of other f-divergences has been largely confined to offline RL or alignment tasks (``f-PO'' and ``f-DPO'')~\citep{wangbeyond,han2025f}. This trend severely underestimates their potential to solve the diversity collapse problem. To bridge this critical gap, we introduce the Diversity-Preserving Framework (DPH-RL), an approach that employs different f-divergences to preserve model diversity within the online RLVR framework. A straightforward example is the \textbf{forward-KL divergence} ($D_\text{KL}(\pi_{ref} || \pi_{\theta})=\mathbb{E}_{\pi_{ref}} \log \frac{\pi_{ref}}{\pi_\theta}$), whose theoretical \textbf{mass-covering} property penalizes the policy for failing to cover all solutions in the reference distribution. From a practitioner's standpoint, the forward-KL objective effectively creates an ``anchor dataset", forcing the model to continuously rehearse its original knowledge base---a mechanism that mirrors human learning and prevents the catastrophic forgetting seen with reverse-KL. We generalize this concept to the broader family of \textbf{f-divergences}, including Jensen-Shannon (JS) divergence and $\alpha$-divergence (see Table~\ref{tab:f_divergences}). Our experiments demonstrate that these mass-covering divergences, specifically forward-KL and JS-divergence, lead to significant improvements in both Pass@1 and Pass@k, maintain strong out-of-domain performance, and achieve these gains without external models. Notably, this approach is \textbf{orthogonal} to existing methods that focus on entropy control or reward shaping.

We make three main contributions:\\
1. Systematic Analysis of Diversity Collapse: Focusing on the KL divergence term, we provide a systematic analysis of the solution diversity collapse in RLVR, identifying the standard reverse-KL divergence as a primary cause. We show that its mode-seeking nature not only suppresses Pass@k performance—often to levels below the base model—but also exacerbates catastrophic forgetting and leads to poor out-of-domain generalization. \\
2. A Novel DPH-RL Framework: We reframe the role of the KL divergence, proposing its use not as a mere policy constraint but as an active diversity-preserving mechanism. Based on this principle, we introduce DPH-RL, a novel framework that employs mass-covering f-divergences (e.g., Forward-KL and JS-divergence) to serve as a rehearsal mechanism. It effectively outperforms GRPO in both Pass@1 and Pass@k, while mitigating the Pass@k degradation relative to the base model.\\
3. Extensive Empirical Validation: Through extensive experiments on a range of models (Llama and Qwen, 7B to 32B) and complex reasoning tasks (mathematics and SQL), we demonstrate the robustness and superiority of DPH-RL. Our method consistently outperforms prior work on both in-domain and out-of-domain benchmarks, successfully mitigating the trade-off between greedy performance and solution diversity.




\vspace{-10pt}

\section{Related Work}
\vspace{-10pt}

A more recent development is RLVR~\citep{yue2025does}, a promising strategy for boosting LLM reasoning, especially in areas such as mathematics, coding, and analysis~\citep{deepseekmath, guo2025deepseek, qwen2.5, lambert2025tulu3pushingfrontiers, htl,shen2025aienhanced,o1,ki1.5,qwen3}. To address diversity collapse in RLVR, one major line of research focuses on either directly controlling model entropy~\citep{cui2025entropymechanismreinforcementlearning,prabhudesai2025maximizingconfidenceimprovesreasoning,agarwal2025unreasonableeffectivenessentropyminimization,li2025curecriticaltokenguidedreconcatenationentropycollapse,cheng2025reasoningexplorationentropyperspective,liang2025pass1selfplayvariationalproblem} or strategically capturing high-entropy tokens~\citep{wang20258020rulehighentropyminority,chen2025seedgrposemanticentropyenhanced,minimax2025minimaxm1scalingtesttimecompute}. However, some research suggests that the entropy of specific tokens is more strongly correlated with model diversity than sentence-level entropy~\citep{wang20258020rulehighentropyminority,cheng2025reasoningexplorationentropyperspective,deng2025decomposingentropyperformanceexchangemissing,chen2025seedgrposemanticentropyenhanced}; consequently, other methods maintain training diversity by controlling the model's Pass@k metric~\citep{rl_plus,zhang2025count}. From a training design perspective, methods such as dynamically adjusting rollout-related hyperparameters~\citep{Polaris2025,he2025skyworkopenreasoner1,dapo} or using a hybrid SFT+RL~\citep{liu2025acereasonnemotron11advancingmath,liu2025uftunifyingsupervisedreinforcement,liu2025ghpoadaptiveguidancestable} training paradigm are also employed to prevent diversity collapse. It is worth noting that theoretically, DPH-F can be viewed as a divergence-based Experience Replay, sharing a similar philosophy with some existing work~\cite{ouyang2022training,zhang2025rlep}, while DPH-JS extends this into a \textit{Regularized Rehearsal} mechanism via dual-sided sampling. Please refer to Appendix~\ref{app:replay_connection} for details.

Our work is most closely related to recent studies that explore f-divergences for policy optimization. Notably, \cite{wang2024beyond, han2025f} \textbf{replace the reverse-KL divergence in the offline DPO objective with a generalized f-divergence, deriving a new family of ``f-PO'' algorithms}. However, our work is distinct: unlike f-PO's focus on offline preference alignment, we utilize online RLVR with verifiable rewards. Our objective is to address Pass@k diversity collapse through mass-covering f-divergences rather than human preference alignment.

\section{Preliminaries}

\subsection{$f$-divergence}
In information theory, an \textit{f-divergence} is a function $D_f(p\|q)$ that measures the difference between two probability distributions $p$ and $q$. Given a convex function $f: \mathbb{R}^+ \to \mathbb{R}$ such that $f(1) = 0$, the f-divergence is defined as: $$
D_f(p\|q) = \int q(x) f\left(\frac{p(x)}{q(x)}\right) dx
$$. 
This general form allows for the unification of many common divergence measures under a single theoretical framework. The condition $f(1)=0$ ensures that $D_f(p\|q)=0$ if and only if $p=q$. Many common divergences are special cases of f-divergences, corresponding to a specific choice for the generator function $f$. For instance, KL divergence and JS divergence are all instances of f-divergences. The generator function for JS divergence is $(u \log u - (u+1)\log\frac{u+1}{2})/2 $. A summary of some prominent f-divergences and their corresponding generator functions is provided in ~\ref{tab:f_divergences}.

\subsection{Markov Decision Process}

We consider a discounted Markov Decision Process (MDP) defined by the tuple $(\mathcal{S}, \mathcal{A}, P, r, \rho_0, \gamma)$, where $\mathcal{S}$ is the state space, $\mathcal{A}$ is the action space, $P(s'|s, a)$ is the transition probability, $r(s)$ is the reward function, $\rho_0$ is the initial state distribution, and $\gamma \in [0, 1)$ is the discount factor. A stochastic policy $\pi(a|s)$ defines a probability distribution over actions for each state. The goal is to maximize the performance objective, i.e., the expected cumulative discounted reward:
\begin{equation}
J(\pi) = \mathbb{E}_{\tau \sim \pi} \left[ \sum_{t=0}^{\infty} \gamma^t r(s_t) \right], \quad \text{where} \quad
s_0 \sim \rho_0(s_0), \ a_t \sim \pi(a_t|s_t), \ s_{t+1} \sim P(s_{t+1}|s_t, a_t).
\end{equation}

A seminal result from \cite{kakade2002approximately} expresses the performance of a policy $\pi$ in terms of an older policy $\pi_{\text{old}}$ using the advantage function $A_{\pi_{\text{old}}}(s, a)$:$
J(\pi) = J(\pi_{\text{old}}) + \mathbb{E}_{\tau \sim \pi} \left[ \sum_{t=0}^{\infty} \gamma^t A_{\pi_{\text{old}}}(s_t, a_t) \right].$
This can be rewritten using the discounted state visitation distribution, $\rho_{\pi}(s) \triangleq \sum_{t=0}^\infty \gamma^t P(s_t=s | \pi)$, which gives the probability distribution over states encountered under policy $\pi$:
\begin{equation}
J(\pi) = J(\pi_{\text{old}}) + \sum_{s \in \mathcal{S}} \rho_{\pi}(s) \sum_{a \in \mathcal{A}} \pi(a|s) A_{\pi_{\text{old}}}(s, a).
\end{equation}
Directly optimizing this expression is difficult because the state distribution $\rho_{\pi}$ depends on the new policy $\pi$. Therefore, algorithms like TRPO and PPO optimize a surrogate objective $L_{\pi_{\text{old}}}(\pi)$ by approximating $\rho_{\pi}$ with the distribution from the old policy, $\rho_{\pi_{\text{old}}}$:
\begin{equation}
L_{\pi_{\text{old}}}(\pi) = J(\pi_{\text{old}}) +  \sum_{s \in \mathcal{S}} \rho_{\pi_{\text{old}}}(s) \sum_{a \in \mathcal{A}} \pi(a|s) A_{\pi_{\text{old}}}(s, a).
\end{equation}
This approximation is reliable when $\pi$ is close to $\pi_{\text{old}}$, forming the basis of modern policy gradient methods.

\section{Method}

In this section, we detail our implementation of reinforcement learning (RL) with an f-divergence regularizer. Our objective is to find the optimal policy for the following optimization problem:

\begin{equation}\label{equ:general_objective}
\max_{\pi_\theta}\mathbb{E}_{q \sim \mathcal{D}}\left[\mathbb{E}_{a\sim \pi_{\theta}(\cdot|q)}[r(a|q)] - \eta D_{f}(\pi_{\theta}(\cdot|q)||\pi_{ref}(\cdot|q) )\right].
\end{equation}

In this formulation, $\pi_{ref}$ serves as the initial reference policy, such as a base or SFT model. The term $r(a|q)$ represents the reward for taking action $a$ in response to query $q$, and the hyperparameter $\eta$ controls the penalty for deviating from $\pi_{ref}$. While standard algorithms like PPO can optimize the reward term, our discussion focuses on estimating the f-divergence term. A uniform application of this approach across all queries in dataset $\mathcal{D}$ can be suboptimal. For queries where $\pi_{ref}$ performs well, aggressive reward maximization is unnecessary and risks degrading performance. Conversely, for challenging queries where $\pi_{ref}$ struggles, the f-divergence term can overly constrain the policy $\pi_{\theta}$, limiting its ability to explore better solutions. To address this, we propose a targeted strategy that partitions the dataset $\mathcal{D}$ into two subsets. The methodology is divided into two phases: a \textbf{pre-sampling stage} and an \textbf{online training stage}.

\subsection{Pre-sampling Stage}

We consider a specific instance of f-divergence, \textbf{Forward-KL}, defined as:
\begin{equation*}
D_{\text{forward-KL}}( \pi_{\theta} || \pi_{ref}) \triangleq D_{\text{KL}}(\pi_{ref} || \pi_{\theta}) = \mathbb{E}_{a \sim \pi_{ref}} [\log(\pi_{ref}(a|q)) - \log(\pi_\theta(a|q))].
\end{equation*}
To facilitate computing this expectation, we adopt a pre-sampling strategy. Before training $\pi_\theta$, we partition the dataset $\mathcal{D}$. For each query $Q$, we generate and evaluate $k$ independent samples. Based on a correctness threshold, each query is classified as either ``\textbf{near-perfect}'' or ``\textbf{exploration}," splitting $\mathcal{D}$ into $\mathcal{D}_{\text{pef}}$ and $\mathcal{D}_{\text{exp}}$. This partition allows our agent to focus on challenging examples in $\mathcal{D}_{\text{exp}}$ while using a KL divergence constraint to maintain performance on mastered examples in $\mathcal{D}_{\text{pef}}$. To mitigate sampling bias, we further refine $\mathcal{D}_{\text{pef}}$. For each query in this subset, we draw one final sample. We only retain the query in $\mathcal{D}_{\text{pef}}$ if this new sample is correct; otherwise, we discard the query or move it to $\mathcal{D}_{\text{exp}}$.

\subsection{Online Training Stage}

In the online training stage, we simultaneously train the model using two distinct loss functions tailored for the $\mathcal{D}_{\text{exp}}$ and $\mathcal{D}_{\text{pef}}$ datasets. For samples from $\mathcal{D}_{\text{exp}}$, we want the model to have maximum freedom for exploration. Conversely, for samples from $\mathcal{D}_{\text{pef}}$, we want the model to retain its capabilities. To achieve this, we employ two key f-divergences: \textbf{Forward-KL} and \textbf{JS divergence}. Our method is thus divided into two approaches: \textbf{DPH-F} for forward-KL and \textbf{DPH-JS} for JS divergence. Forward-KL divergence, as we define it, penalizes instances where the reference policy $\pi_{ref}$ assigns a high probability to an action, but the new policy $\pi_{\theta}$ assigns a near-zero probability. This property encourages the new policy $\pi_{\theta}$ to maintain coverage of all modes of the reference policy $\pi_{ref}$, thereby preserving its original diversity. JS divergence provides a symmetric and more stable alternative to KL divergence. It encourages the new policy $\pi_{\theta}$ to maintain high similarity with the reference policy $\pi_{ref}$ while achieving high performance, effectively preventing policy collapse.

\subsubsection{Generator-Based Implementation}

Our implementation relies on pre-sampling from the reference policy. This approach allows us to compute the divergence term using a static dataset, eliminating the need to run inference with the reference model during the online training loop.

\paragraph{Loss for $\mathcal{D}_{\text{exp}}$}

For challenging samples in $\mathcal{D}_{\text{exp}}$, we remove the KL divergence penalty from the loss function entirely. This allows the model to perform pure policy optimization based solely on the reward signal, enabling more aggressive exploration. Specifically, the loss function for these samples is the standard PPO-clip objective:

\begin{equation}
\label{eq:dphrl_exp}
\begin{split}
\mathcal{L}_{\text{DPH-exp}}(\theta) ={}& -\mathbb{E}_{\substack{q\sim \mathcal{D}_{\text{exp}} \\ o_i\sim \pi_{\theta_{\text{old}}}(\cdot\mid q)}} \Bigg[ \frac{1}{G}\sum_{i=1}^{G} \frac{1}{|o_i|}\sum_{t=1}^{|o_i|} \min \left( \rho_{i,t}(\theta) \hat{A}_{i,t}, \right. \\
& \qquad \left. \text{clip} \left( \rho_{i,t}(\theta), 1 - \varepsilon, 1 + \varepsilon \right) \hat{A}_{i,t} \right) \Bigg]
\end{split}
\end{equation}
Please refer to Appendix \ref{appendix:rlvr algo} for the definitions of the symbols in the formula.

\paragraph{Loss for $\mathcal{D}_{\text{pef}}$}
For all f-divergences, the general formula for calculating the loss on $\mathcal{D}_{\text{pef}}$ is:
\begin{equation}
\label{eq:fkl_pef_penalty}
\mathcal{L}_{\text{pef}}(\theta) = \mathbb{E}_{q \sim \mathcal{D}_{\text{pef}}} \left[D_f(\pi_{\theta}||\pi_{ref}\right)].
\end{equation}

For DPH-F, the loss term is:
$\mathbb{E}_{q \sim \mathcal{D}_{\text{pef}}} \left[\sum_a \pi_{ref}(a|q)\log\left(\frac{\pi_{ref}(a|q)}{\pi_{\theta}(a|q)}\right) \right]$, where a represents a single response sampled once from the reference policy $\pi_{ref}(.|q)$. For DPH-JS, the loss term is:
$\mathbb{E}_{q \sim \mathcal{D}_{\text{pef}}} \left[ \sum_a \pi_{ref}(a|q) \left( \frac{u \log u}{2} - \frac{u+1}{2} \log\left(\frac{u+1}{2}\right) \right) \right]$, where $u= \pi_{\theta}(a|q)/\pi_{ref}(a|q)$.
\subsubsection{Overall Loss Function}

The total loss for a given batch is a combination of these two objectives. For a mixed batch, we determine whether the data comes from $\mathcal{D}_{\text{pef}}$ or $\mathcal{D}_{\text{exp}}$ and then calculate the corresponding loss:
\begin{equation}
\mathcal{L}_{\text{DPH-RL}}(\theta) = \mathcal{L}_{\text{exp}}(\theta) + \eta \mathcal{L}_{\text{pef}}(\theta),
\end{equation}

\subsection{Enhanced Monotonic Improvement Guarantee}
In this section, we derive an enhanced monotonic improvement guarantee for TRPO-style algorithms that leverage our proposed method. Our framework is built upon two distinct datasets: $\mathcal{D}_{\text{pef}}$, which stores near-perfect reasoning trajectories, and $\mathcal{D}_{\text{exp}}$, an exploration set for tasks the model has not yet mastered. Central to our analysis is a conditional reference policy, $\pi_{\text{pef}}$, whose definition depends on the data source. For states associated with the \textbf{near-perfect dataset ($\mathcal{D}_{\text{pef}}$)}, $\pi_{\text{pef}}$ is the policy induced by the stored correct trajectories. The f-divergence regularization term $D_f(\pi \Vert \pi_{\text{pef}})$ pulls the learned policy $\pi$ towards these expert solutions. For states associated with the \textbf{exploration dataset ($\mathcal{D}_{\text{exp}}$)}, $\pi_{\text{pef}}$ is defined as the current policy $\pi$. This makes the regularization term $D_f(\pi \Vert \pi)$ equal to zero, thereby disabling it and permitting unrestricted exploration. To analyze the effect of this conditional regularization, our derivation hinges on the following mild assumption.

\begin{assumption}
\label{assump:pef_advantage}
For any policy $\pi$ encountered during training, there exists a constant $\delta \geq 0$ such that for any state $s$, the expected advantage of actions from the reference policy $\pi_{\text{pef}}$, evaluated with respect to $\pi$, is lower-bounded by $\delta$: $
\mathbb{E}_{a_{\text{pef}} \sim \pi_{\text{pef}}(\cdot|s)}[A^{\pi}(s, a_{\text{pef}})] \geq \delta.
$
\end{assumption}

This assumption is well-founded. In regions of the state space well-represented by the near-perfect dataset $\mathcal{D}_{\text{pef}}$, the actions from $\pi_{\text{pef}}$ are expected to be superior, yielding a positive advantage ($\delta > 0$). Conversely, for states associated with $\mathcal{D}_{\text{exp}}$, where our method effectively applies no regularization, the bound trivially holds with $\delta = 0$. Leveraging this assumption, we present the following theorem, which establishes a stronger lower bound on policy improvement than the one in the original TRPO analysis \citep{schulman2015trust}.

\begin{theorem}[Enhanced Monotonic Improvement]
\label{theorem:monotonic_improvement}
Let $\alpha_1 = \max_s D_{\text{KL}}(\pi(\cdot|s) \Vert \pi_{\text{old}}(\cdot|s))$ and $\alpha_2 = \max_s D_{f}(\pi(\cdot|s) \Vert \pi_{\text{pef}}(\cdot|s))$, where the divergence $D_f$ can be the forward-KL, $\alpha$-divergence, or Jensen-Shannon divergence. If Assumption~\ref{assump:pef_advantage} holds, then the following bound on policy improvement is guaranteed:
\begin{equation}
\label{eq:monotonic_bound}
J(\pi) - L_{\pi_{\text{old}}}(\pi) \geq 
- \frac{2\gamma\alpha_1 \epsilon_{\pi} }{(1-\gamma)^2} + \epsilon_f 
\end{equation}
where $\epsilon_f = \frac{\delta}{1-\gamma} - \frac{ C_f \gamma\alpha_2  \epsilon_{\text{pef}}}{(1-\gamma)^2}$, $\epsilon_{\pi} = \max_{s,a} |A_{\pi}(s,a)|$ and $\epsilon_{\text{pef}} = \max_{s,a} |A_{\pi_{\text{pef}}}(s,a)|$, $C_f$ is a positive constant depending on the choice of the f-divergence.
\end{theorem}

Comparing with the monotonic improvement theorem in \cite{schulman2015trust}, where the right hand side of the inequality is $-\frac{2\gamma \alpha_1\epsilon_\pi}{(1-\gamma)^2}$, we have an additional term $\epsilon_f$.  On the distribution associated with expert dataset $\mathcal{D}_{\text{pef}}$, where $\delta > 0$ and effective regularization keeps $\alpha_2$ small, $\epsilon_f$ becomes a positive bonus. This results in a strictly better lower bound on policy improvement than in the original TRPO analysis, rigorously justifying that our method accelerates convergence by leveraging known expert behavior. On the exploration set $\mathcal{D}_{\text{exp}}$, the bound gracefully reduces to the standard TRPO guarantee. The proof of Theorem~\ref{theorem:monotonic_improvement} follows a similar line of reasoning to the analyses in \citet{schulman2015trust} and \citet{kang2018policy}. The detailed derivation is deferred to the Appendix \ref{app:proof} for the completeness.

\vspace{-3pt}
\section{Experiments}

\definecolor{c1}{HTML}{DC143C}
\definecolor{mine}{RGB}{205, 232, 248}
\definecolor{c2}{HTML}{32CD32}

In this section, we conduct extensive experiments to demonstrate the effectiveness and generalization of DPH-RL. The experimental setups were applied to two types of tasks: SQL (where the LLM generates SQL code and executes it using a tool) and mathematical reasoning, using both Llama 3.1~\cite{meta2024llama3} and Qwen2.5~\cite{qwen2.5} series models. In our experiments, we partition the dataset into $\mathcal{D}_{\text{exp}}$ and $\mathcal{D}_{\text{pef}}$ based on the capabilities of each model. For the Llama model, we define the $\mathcal{D}_{\text{pef}}$ using a success rate of 6 out of 8 attempts, while for the Qwen model, this threshold is raised to 7 out of 8 attempts. Since our method is orthogonal to existing GRPO variants, we use GRPO, DAPO, and reverse-KL (RKL) as baselines.  Please refer to Appendix~\ref{training_details} for training details.

\subsection{SQL}

\begin{table*}[htpb]
\centering
\caption{Results of \textbf{Llama-3.1-8B-Instruct} on SQL Tasks. The highest scores for RL Models are \colorbox{mine}{highlighted}. Figure~\ref{fig:error_bar} shows the error bars for each method over three training runs.}
\label{TableSQL}
\footnotesize
\begin{adjustbox}{max width=\textwidth}
\begin{tabular}{lcccccc}
\toprule
\multirow{2}{*}{\textbf{Model}} & \multicolumn{3}{c}{\textbf{Bird}} & \multicolumn{3}{c}{\textbf{Spider}} \\
\cmidrule(lr){2-4} \cmidrule(lr){5-7}
& {Greedy} & {Pass@8} & {Pass@16} & {Greedy} & {Pass@8} & {Pass@16} \\
\midrule
Base Model & 42.4 & 68.8 & 75.0 & 71.0 & 90.9 & 93.2 \\
\midrule
\multicolumn{7}{l}{\textit{RL Models}} \\
GRPO & 58.5 & 66.2 & 67.7 & 73.0 & 79.5 & 80.6 \\
DAPO & 60.0 & 67.2 & 69.0 & 71.1 & 75.3 & 76.7 \\
RKL & 60.0 & 69.8 & 71.8 & 71.0 & 79.0 & 80.6 \\
DPH-F & 60.4 & 70.1 & 71.6 & 77.4 & 84.5 & 85.7 \\
\midrule

\multicolumn{7}{l}{{$\alpha$ Divergence}} \\
\hspace{0.5cm}-$\alpha$=0.2 & 60.1 & 67.0 & 69.1 & 75.4 & 78.8 & 80.1 \\
\hspace{0.5cm}-$\alpha$=0.5 & 60.8 & 68.0 & 69.8 & 75.2 & 80.4 & 82.5 \\
\hspace{0.5cm}-$\alpha$=0.8 & 60.5 & 69.2 & 70.4 & 75.8 & 81.7 & 83.6 \\
\midrule
\multicolumn{7}{l}{{DPH-JS (\textit{Generator})}} \\
\hspace{0.5cm}-$\eta$=0.01 & 59.8 & 66.7 & 68.1 & 76.4 & 79.1 & 80.8 \\
\hspace{0.5cm}-$\eta$=0.05 & 61.3 & 69.5 & 70.9 & 75.3 & 82.7 & 83.8 \\
\hspace{0.5cm}-$\eta$=0.2 & \colorbox{mine}{62.8} & \colorbox{mine}{70.5} & \colorbox{mine}{72.4} & 76.0 & 82.7 & 84.1 \\
\midrule
\multicolumn{7}{l}{{DPH-JS (\textit{Divergence Definition})}} \\
\hspace{0.5cm}-$\eta$=0.01 & 59.6 & 66.6 & 68.2 & 76.7 & 80.5 & 81.5 \\
\hspace{0.5cm}-$\eta$=0.05 & 61.4 & 69.6 & 71.3 & 75.9 & 83.5 & 84.7 \\
\hspace{0.5cm}-$\eta$=0.2 & 62.4 & 70.1 & 71.9 & \colorbox{mine}{78.6} & \colorbox{mine}{85.2} & \colorbox{mine}{86.7} \\
\bottomrule
\end{tabular}
\end{adjustbox}
\end{table*}



As shown in Table~\ref{TableSQL}, in the Bird dataset, the Pass@8 scores for both GRPO and DAPO are lower than the base model, while our DPH-F and DPH-JS methods surpass the base model. This indicates that our strategy possesses a more robust ability to maintain model diversity. Specifically, DPH-JS shows Pass@8 scores that are 4.3\% and 3.3\% higher than GRPO and DAPO, respectively. For the Spider dataset, which is a cross-domain SQL task, the Pass@k metrics for all models generally suffer from performance degradation. However, DPH-F and DPH-JS can maintain accuracy levels closer to the base model. While DAPO performs better than GRPO on the Bird dataset, its performance is more unstable on the cross-domain data. Regarding cross-domain performance preservation, DPH-F demonstrates a more powerful capability, with its Pass@16 scores being 9.0\% higher than DAPO. This suggests that for simple tasks that the model can already handle correctly, preservation of its original capabilities is more crucial than pure exploration. We conducted experiments on \textbf{OmniSQL-32B}, with the results presented in Table~\ref{omnisql-32b}. Its Greedy performance on the Bird dataset surpasses all current open-source models.

\vspace{-3pt}
\paragraph{Ablation Study}

Our method introduces a plug-and-play $f$-divergence loss, weighted by a hyperparameter $\eta$, which we analyze in an ablation study. As shown in Table~\ref{TableSQL}, when the value of $\eta$ is minimal, its actual setup and performance approximate that of DAPO trained exclusively on $\mathcal{D}_{\text{pef}}$. Increasing the value of $\eta$ leads to steady growth in the Pass@16 score, demonstrating the effectiveness of our method. Additionally, we compared two implementations of $f$-divergence: the ``Generator'' form and the ``Divergence Definition'' form (detailed in Appendix~\ref{definition}). Although both yield similar performance, the ``Divergence Definition'' form is computationally expensive as it requires resampling from $\pi_\theta$ and an additional reference model to calculate $\pi_{ref}(a|q)$. Consequently, the ``Generator'' form is significantly more time-efficient.

\begin{table}
 \centering 
 \caption{Out-of-Domain Performance. Evaluating SQL-trained models on mathematical tasks. $\alpha=0.2$ indicates the use of $\alpha$-divergence with $\alpha=0.2$.} 
 \label{SQL_OOD} 
 \footnotesize 
 \adjustbox{max width=\linewidth}{ 
 \renewcommand{\arraystretch}{1.1} 
 \begin{tabular}{lccccccc} 
 \toprule 
 \multirow{2}{*}{\textbf{Model}} & \multicolumn{2}{c}{\textbf{Pass@64}} & \multicolumn{3}{c}{\textbf{Pass@16}} & \textbf{Pass@8} & \multirow{2}{*}{\textbf{Avg}} \\ 
 \cmidrule(lr){2-3} \cmidrule(lr){4-6} \cmidrule(lr){7-7} 
 & AIME24  & AMC23 & Math500 & Olympiad & Minerva & College Math \\ 
 \midrule 
 \multicolumn{8}{c}{\textbf{Llama-3.1-8B-Instruct (SQL-trained)}} \\ 
 \midrule 
 Base Model & 40.0  & 95.0 & 81.2 & 46.4 & 54.0 & 45.5 & 60.35 \\ 
\multicolumn{7}{l}{\textit{RL Models}} \\
 \quad-GRPO & 33.3  & 77.5 & 72.0 & 41.8 & 51.2 & 38.4 & 52.37 \\ 
 \quad-DAPO & 30.0  & 77.5 & 72.8 & 44.4 & 52.3 & 38.8 & 52.63 \\ 
 \quad-RKL & 23.3  & 72.5 & 70.8 & 33.8 & 49.2 & 41.1 & 48.45 \\ 
 \quad-$\alpha$=0.2 & 40.0 & 90.0 & 79.8 & 44.0 & 51.8 & 41.8 & 57.90 \\ 
 \quad-$\alpha$=0.5 & 36.7  & 85.0 & 80.2 & 43.7 & 50.7 & 43.4 & 56.62 \\ 
 \quad-$\alpha$=0.8 & 43.3  & 87.5 & 80.8 & 43.1 & 54.0 & 44.2 & 58.82 \\ 
 \quad-DPH-F & \colorbox{mine}{46.7} & \colorbox{mine}{95.0} & 80.8 & 46.1 & \colorbox{mine}{54.3} & 43.0 & \colorbox{mine}{60.98} \\ 
 \quad-DPH-JS & 40.0  & 92.5 & \colorbox{mine}{81.2} & \colorbox{mine}{48.2} & 53.8   & \colorbox{mine}{45.7} & 60.23 \\ 
 \midrule 
 \end{tabular} 
 } 
 \end{table}

\vspace{-3pt}
\paragraph{Performance on OOD Tasks} As shown in Table~\ref{SQL_OOD}, we evaluated the diversity of our RL-trained models on five out-of-domain (OOD) mathematical datasets using a Llama model. (We excluded the AIME25 dataset due to its high fluctuation, ensuring a fair comparison.) Without incorporating additional general-purpose data, methods like GRPO and DAPO cause models to overfit to the SQL training domain, leading to a significant drop in performance on other OOD tasks. In contrast, our DPH-F and DPH-JS methods maintain a higher average performance, surpassing DAPO by 8.35\% and 7.6\%, respectively.

\begin{figure}[htbp]
  \centering
  \begin{subfigure}[b]{0.61\textwidth}
    \centering
    \includegraphics[width=\textwidth]{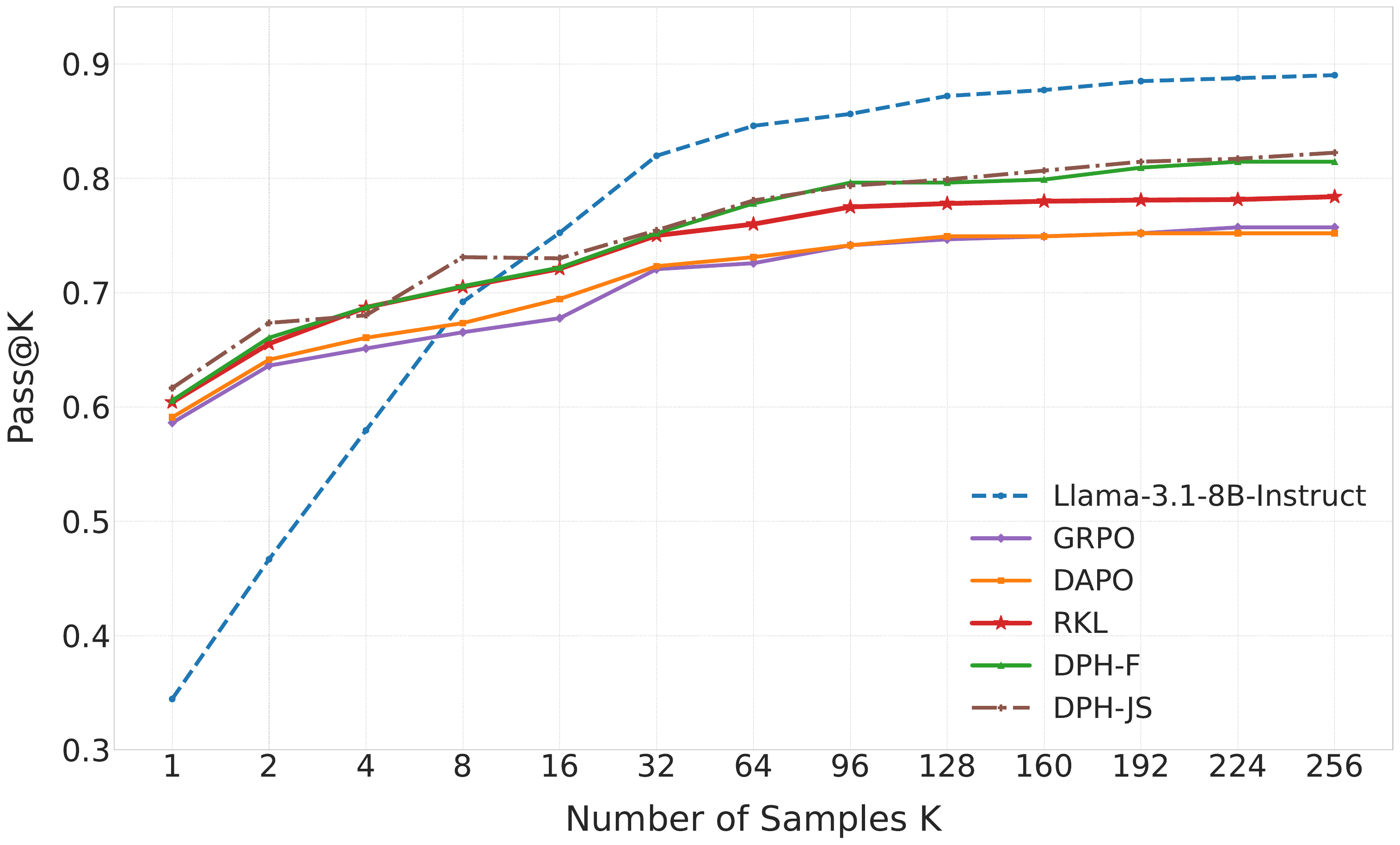}
    \caption{}
    \label{fig:pass256}
  \end{subfigure}
  \hfill 
  \begin{subfigure}[b]{0.36\textwidth}
    \centering
    \includegraphics[width=\textwidth]{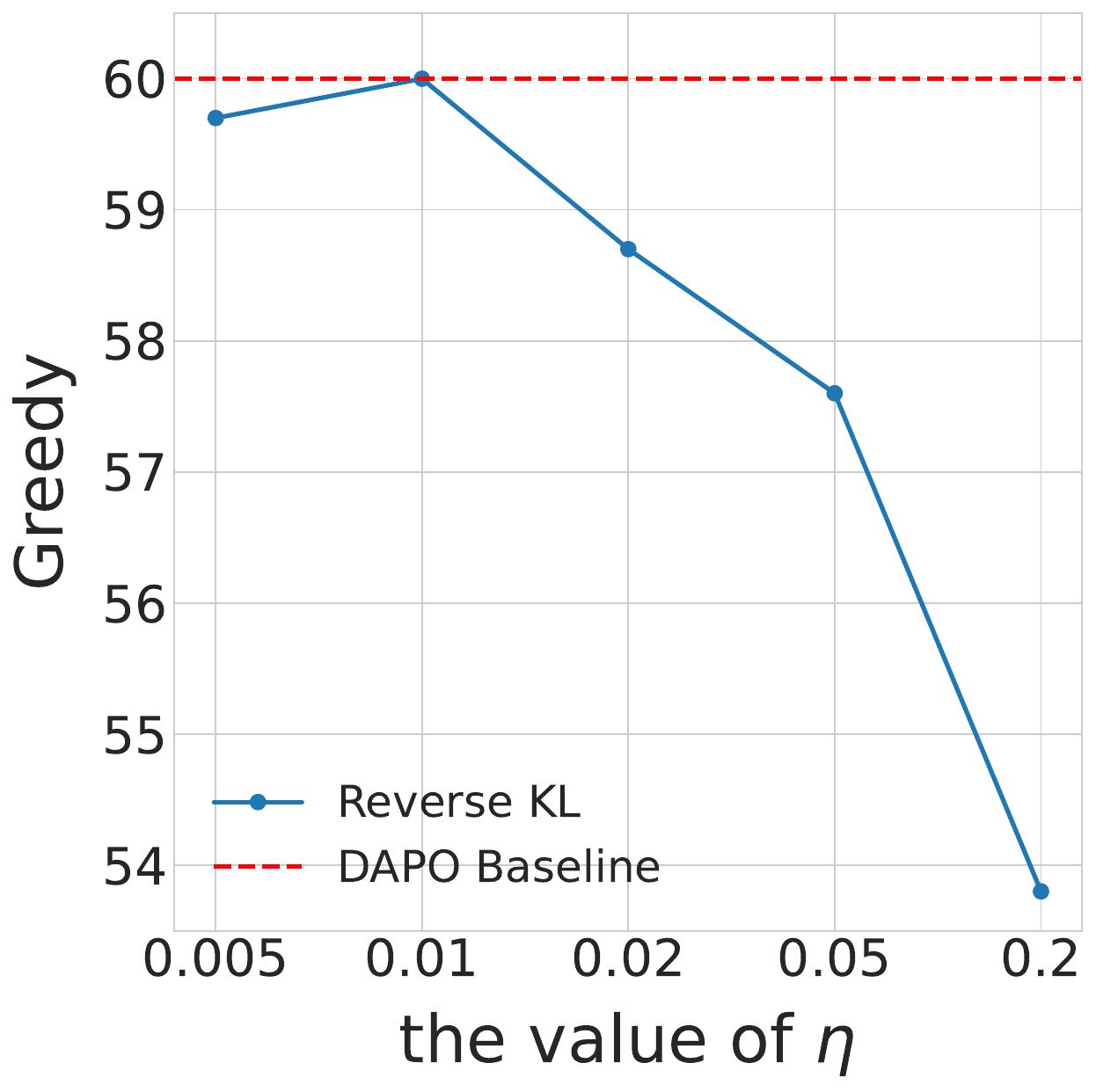}
    \caption{}
    \label{fig:rkl_set}
  \end{subfigure}
  
  \caption{Results of Llama models on the BIRD dataset. (a) Performance of different methods at different pass@k levels. (b) The impact of different KL term parameters on RKL.}
  \label{fig:combined_results}
\end{figure}
\vspace{-6pt}
\paragraph{Performance with Massive Sampling}
We evaluated the performance of various methods as the number of samples increases. As illustrated in Figure~\ref{fig:pass256}, the Pass@$k$ scores for all methods reach convergence by $k=256$. Notably, standard GRPO and DAPO saturate early, plateauing at approximately 75\%. Although DAPO outperforms GRPO at small $k$ values, its mechanism fails to scale effectively to larger $k$. The DPH family of methods significantly outperforms the GRPO baseline. Regarding Reverse KL, while it yields higher performance than GRPO, it remains significantly inferior to the DPH series. Overall, DPH-JS emerges as the superior method across both Pass@1 and Pass@$k$ metrics.
\vspace{-3pt}
\paragraph{Comparisons Under Different $f$-divergences}
In Tables~\ref{TableSQL} and~\ref{SQL_OOD}, we analyzed the impact of different $f$-divergences on the Llama model. First, DAPO and GRPO, which lack any form of $f$-divergence constraint, experienced severe Pass@k collapse on both in-domain and out-of-domain tasks. Conversely, while RKL could maintain a high Pass@k on the training task's test set, its performance collapse on tasks with different distributions was even more severe than methods without any KL penalty. This highlights the limitations of reverse-KL: it causes the model to over-focus on the training data's distribution, thereby completely sacrificing generalization. To validate the reliability of our experimental baseline, we show the performance of RKL with different $\eta$ values in Figure~\ref{fig:rkl_set}. When $\eta > 0.02$, the model's learning performance fails to surpass that of DAPO. This indicates that the chosen $\eta$ value is too large, and therefore our selection of $\eta=0.01$ is well-justified. The $\alpha$ divergence sits between the forward-KL and reverse-KL, effectively preventing the model from veering toward extremes. Across all datasets, it shows a clear trend: as the $\alpha$ value increases, it theoretically approaches the capabilities of forward-KL, while experimentally maintaining a higher Pass@k. In contrast, DPH-F and DPH-JS methods demonstrated strong generalization across different tasks. The DPH-JS method, in particular, not only maintained a higher greedy performance on training tasks but also preserved a high Pass@k value on out-of-domain tasks, which fully demonstrates its significant advantage in preserving a multi-peak distribution.

\subsection{Mathematical Reasoning}

\begin{table*}[ht]
\centering
\caption{The Pass@k metric for models trained on math datasets. In the Llama experiments, the DAPO settings ($\epsilon_\text{high}$=0.28) caused training to crash. To conduct effective experiments, we aligned the settings for all Llama experiments on the Math dataset with GRPO ($\epsilon_\text{high}$=0.2).}
\label{Table3}
\small
\adjustbox{max width=\linewidth}{
\renewcommand{\arraystretch}{1.1} 
\begin{tabular}{lcccccccc}
\toprule
\multirow{2}{*}{\textbf{Model}} & \multicolumn{3}{c}{\textbf{Pass@64}} & \multicolumn{3}{c}{\textbf{Pass@16}} & \textbf{Pass@8} & \multirow{2}{*}{\textbf{Avg}} \\
\cmidrule(lr){2-4} \cmidrule(lr){5-7} \cmidrule(lr){8-8}
& AIME24 & AIME25 & AMC23 & Math500 & Olympiad & Minerva & College Math \\
\midrule
\multicolumn{9}{c}{\textbf{Llama-3.1-8B-Instruct}} \\
\midrule
Base Model & 40.0 & 23.3 & 95.0 & 81.2 & 46.4 & 54.0 & 45.5 & 55.06 \\
RL Models \\
\quad-GRPO & 33.3 & 26.7 & 80.0 & 79.6 & 43.3 & 56.6 & 43.1 & 51.80 \\
\quad-RKL & 36.7 & 16.7 & 75.0 & 80.0 & 39.1 & 56.6 & 43.4 & 49.64 \\
\quad-DPH-F & 36.7 & 26.7 & \colorbox{mine}{90.0} & 80.6 & 44.3 & 57.3 & \colorbox{mine}{45.5} & \colorbox{mine}{54.44} \\
\quad-DPH-JS & \colorbox{mine}{40.0} & 26.7 & 82.5 & \colorbox{mine}{81.4} & \colorbox{mine}{45.8} & \colorbox{mine}{58.1} & 45.1 & 54.23 \\
\midrule
\multicolumn{9}{c}{\textbf{Qwen2.5-Math-7B}} \\
\midrule
Base Model & 63.3 & 56.7 & 87.5 & 88.8 & 61.9 & 56.6 & 42.9 & 65.39 \\
RL Models \\
\quad-GRPO & 56.6 & 50.0 & 97.5 & \colorbox{mine}{93.0} & 62.8 & 64.0 & 50.1 & 67.71 \\
\quad-DAPO & 63.3 & 46.7 & 97.5 & 92.2 & 63.1 & 64.3 & 48.7 & 67.97 \\
\quad-RKL & 66.7 & 40.0 & 97.5 & 92.0 & 64.6 & 64.0 & 51.3 & 68.01 \\
\quad-DPH-F & \colorbox{mine}{73.33} & 50.0 & 97.5 & 92.4 & 63.8 & 64.8 & 50.9 & 70.39 \\
\quad-DPH-JS & 66.7 & \colorbox{mine}{53.3} & \colorbox{mine}{100.0} & 92.8 & \colorbox{mine}{65.2} & \colorbox{mine}{66.2} & \colorbox{mine}{51.0} & \colorbox{mine}{70.74} \\
\bottomrule
\end{tabular}
}
\end{table*}

For the mathematical tasks, we present two types of results, shown in Table~\ref{Table3} and Table~\ref{Table4}, respectively. Table~\ref{Table3} shows the Pass@k scores on various test sets, while Table~\ref{Table4} shows the Mean@k scores, which is the average accuracy calculated by sampling the entire dataset k times. Regarding mathematical ability, different model families have completely different capabilities. For the Llama model, the improvement through RL is very limited; the average mean@k value when using GRPO only increased by 0.93, while Pass@k decreased by 3.26. By contrast, the Qwen model is a more capable model, which allows it to achieve a significant boost in both Mean@k (approximately 20\%) and Pass@k. We selected these two models to explore the performance of the DPH method under different model strengths. For Llama, DPH-JS maintained the original Pass@k value and significantly improved mean@k on AIME, which demonstrates that DPH-JS is a powerful and highly versatile method with strong capabilities in both exploration and preservation. Furthermore, DPH-JS provided a more balanced improvement than GRPO, increasing both Pass@k and mean@k averages, showing it does not sacrifice one metric for the other. For the Qwen model, the overall performance showed a dual trend: a slight decrease in Pass@k on the difficult AIME test set, but an increase on other, simpler datasets. In both scenarios, the DPH-JS method achieved the best performance. On AIME, it maintained a diversity closer to that of the base model compared to both GRPO and DAPO. On the other datasets, it achieved a higher Pass@k. This trend highlights a key challenge for large models in RL: the need to both solidify existing knowledge while also exploring new boundaries. The DPH-JS method successfully balances these two conflicting goals.

\vspace{-6pt}
\section{Analysis}
\vspace{-10pt}
\subsection{Analysis of Keep and Exploration}
\begin{figure}[htbp]
  \centering
  \includegraphics[width=1.0\textwidth]{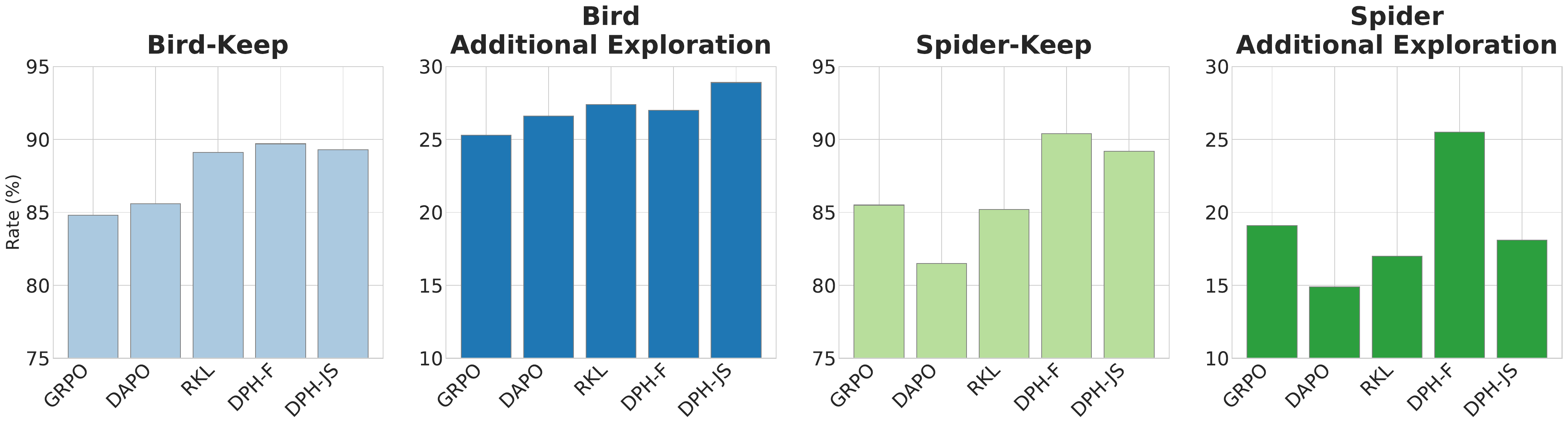}
  \caption{Evaluating RL-Tuned Llama models vs. base model: knowledge retention (keep) vs. new solution discovery (additional exploration).
  }
  \label{fig:keep_exp}
\end{figure}

To gain a more granular understanding of why our method achieves higher Pass@k scores, we extracted correct and incorrect samples from the base model's Pass@8 results. These were designated as data subsets $D_{sub}^{correct}$ and $D_{sub}^{wrong}$, respectively. We then compared them with the model after reinforcement learning. The proportion of samples it got correct on $D_{sub}^{correct}$ is its keep rate, while the proportion it got correct on the incorrect samples from $D_{sub}^{wrong}$ is its additional exploration rate. As shown in Figure~\ref{fig:keep_exp}, the keep rate of GRPO and DAPO both decreased on the two datasets, while the KL method mainly maintained a higher keep rate to maintain a higher Pass@k value. DAPO had higher exploration within the domain than GRPO, but the keep rate outside the domain dropped significantly. RKL could only have a high keep rate on the same Bird test set as the training set, while it dropped to a level similar to GRPO on Spider. DPH-JS had higher exploration on the Bird test set, while DPH-F was higher on the Spider dataset. Furthermore, Appendix~\ref{app:training_progress} shows that DPH is the most stable and effective method during training, while other mainstream approaches either have poor generalization (e.g., DAPO) or diversity collapse late in training (e.g., GRPO and RKL).

\subsection{Impact of Splitting $\mathcal{D}_{\text{exp}}$ and $\mathcal{D}_{\text{pef}}$}
\begin{table*}[htpb]
\centering
\caption{`8 of 8' indicates a threshold where samples correctly solved in 8 out of 8 attempts are assigned to $\mathcal{D}_{\text{pef}}$ in DPH-JS. `Full Data' implies using the entire dataset for both $\mathcal{D}_{\text{pef}}$ and $\mathcal{D}_{\text{exp}}$.}
\label{split}
\footnotesize
\begin{adjustbox}{max width=\textwidth}
\begin{tabular}{lcc cccccc}
\toprule
\multirow{2}{*}{\textbf{Model}} & \multirow{2}{*}{Size of $\mathcal{D}_{\text{exp}}$} & \multirow{2}{*}{Size of $\mathcal{D}_{\text{pef}}$} & \multicolumn{3}{c}{\textbf{Bird}} & \multicolumn{3}{c}{\textbf{Spider}} \\
\cmidrule(lr){4-6} \cmidrule(lr){7-9}
& & & {Greedy} & {Pass@8} & {Pass@16} & {Greedy} & {Pass@8} & {Pass@16} \\
\midrule
GRPO & 8958 & 0& 58.5 & 66.2 & 67.7 & 73.0 & 79.5 & 80.6 \\
DAPO & 8958 & 0& 60.0 & 67.2 & 69.0 & 71.1 & 75.3 & 76.7 \\
8 of 8  & 8266 & 692 & 61.4 & 71.6 & 73.5 & 75.4 & 86.3 & 87.2 \\
7 of 8  & 7429 & 1529 & 61.3 & 71.2 & 73.6 & 75.1 & 85.8 & 87.2 \\
6 of 8   & 6710 & 2248 & \colorbox{mine}{62.8} & 70.5 & 72.4 & \colorbox{mine}{76.0} & 82.7 & 84.1 \\
5 of 8  &5783  &3175 & 60.2 & 70.6 & 74.0 & 74.3 & 86.2 & 88.9 \\
Full Data & 8958 & 8958 & 57.4 & \colorbox{mine}{73.1} & \colorbox{mine}{76.6} & 71.8 & \colorbox{mine}{88.4} & \colorbox{mine}{90.3} \\
\bottomrule
\end{tabular}
\end{adjustbox}
\end{table*}
In the main experiments, we established an accuracy threshold to partition the data into $\mathcal{D}_{\text{exp}}$ and $\mathcal{D}_{\text{pef}}$. This section provides a detailed analysis of the impact of this threshold. As illustrated in Table~\ref{split}, the performance difference between the `7 of 8' and `8 of 8' settings is negligible. At `6 of 8', the model sacrifices some diversity to achieve higher greedy performance; by removing simpler samples from $\mathcal{D}_{\text{exp}}$, the model focuses exploration on more challenging problems, while the KL term ensures continued learning on well-mastered samples. Conversely, at `5 of 8', the significant reduction in $\mathcal{D}_{\text{exp}}$ data leads to a decline in greedy performance, despite yielding improved Pass@$k$ scores. In the `Full Data' setting, greedy performance is poor—even inferior to the GRPO baseline—yet it achieves the highest Pass@$k$. Although the large volume of $\mathcal{D}_{\text{pef}}$ preserves sufficient diversity, `Full Data' suffers from two critical issues: (1) Our goal is to preserve the diversity of correct solutions, not the noise of incorrect ones. Forcing the selection of a single correct response from samples solved correctly only once out of eight attempts introduces statistical bias. (2) Regarding the training objective, the model can trivially mimic the sole correct sample for each query to maximize reward and minimize the KL term, thereby optimizing the loss without genuine learning. In contrast, `8 of 8' represents an unbiased statistical approach requiring no artificial sampling. To compensate for the reduced data scale in this setting, we retain all 8 sampled trajectories for each instance during training. In our experiments, although `8 of 8' allocates only 8\% of the training data to $\mathcal{D}_{\text{pef}}$, it comprehensively outperforms DAPO and GRPO. This demonstrates the robustness of DPH-RL even under extreme conditions, suggesting that directly adopting the `8 of 8' threshold is the most straightforward strategy for its application.

\vspace{-10pt}
\section{Conclusion}
\vspace{-10pt}

We propose DPH-RL to overcome the limitations of reverse KL divergence in reinforcement learning for LLMs. By leveraging $f$-divergence, our method mitigates both diversity collapse and catastrophic forgetting. We demonstrate that two implementations---a ``Generator'' form that requires no reference model during training and a ``Divergence Definition'' form---yield similar performance. This makes our method highly adaptable, providing enhanced capabilities with computational costs comparable to GRPO. We explicitly recommend the DPH-JS (Generator) variant with the `8 of 8' threshold as the optimal configuration for future researchers. However, a key limitation, shared with GRPO, is the dependence on the base model's capabilities, which become scarce on more challenging datasets. Future work will therefore focus on resolving this trade-off, aiming to develop methods that can both learn effectively from imperfect data and mitigate the resulting estimation bias.

\section*{Reproducibility}

Our datasets are all based on open-source datasets. The experimental methodology, data proportions, and hyperparameter settings are detailed in Appendix~\ref{training_details}. 


\bibliography{iclr2026_conference}
\bibliographystyle{iclr2026_conference}
\newpage
\appendix

\section{The Use of Large Language Models}
In our work, we exclusively use LLMs for writing refinement, which means we first write a piece of text ourselves, and then use the LLM to correct grammar, formatting, and other issues. For our experiments, we also use LLMs to help us fix code bugs and generate Python code for plotting.
\section{Multiple Style Capability Experiment}
\begin{figure}[]
  \centering
  \includegraphics[width=1.0\textwidth]{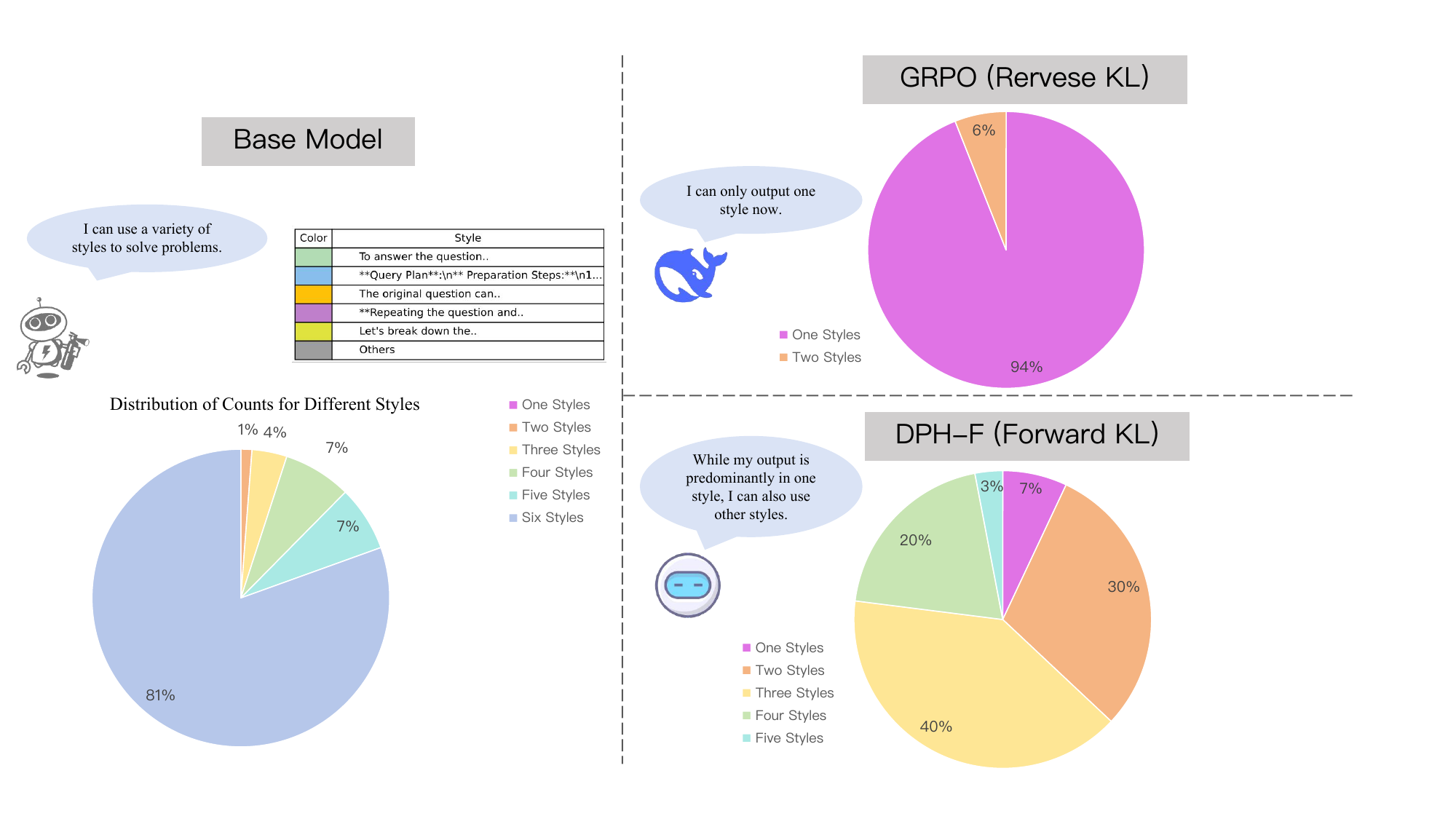}
  \caption{On the left, we construct a base model that outputs multiple solution styles for SQL problems. This model is then used for reinforcement learning training. We calculated the probability of the number of times the model outputted different styles across 32 samples.
  }
  \label{fig:demo}
\end{figure}

We construct a base model that outputs five different solution styles for SQL problems. This model is then used for reinforcement learning training. Next, we sample each question 32 times. Based on the output prefix, we count the number of times each distinct style appears within these 32 samples. For prefixes that are not among our original five styles (e.g., the model combines several different prefixes), we categorize them as ``Others" and treat them as an additional style.

The result shows in Figure~\ref{fig:demo}. On the base model, we find that it can fully output all styles. However, after training with reverse KL-constrained GRPO, most outputs degenerate into a single style. In contrast, forward KL significantly mitigates the degradation of the model's ability to produce different styles.

\section{Preliminaries on $f$-Divergence}
In Table~\ref{tab:f_divergences}, we summarize several typical $f$-divergences and their corresponding generator functions $f(u)$, with the definitions of forward and reverse KL divergence adapted according to the latest research. All listed divergences adhere to the standard properties of the generator function: strict convexity, lower semi-continuity, and the normalization condition $f(1)=0$. Notably, this table clarifies that both forward and reverse KL divergence can be unified as special cases of the $\alpha$-divergence under specific parameters. This provides a unified mathematical framework for measuring and optimizing distribution shifts using $\pi_{\text{ref}}$ in subsequent model training.
\begin{table}[h]
\centering
\caption{Summary of some typical f-divergences $D_f(p\|q)$ together with their generator functions $f(u)$. In this paper, we let $\pi_{ref}$ be $q$, and the formula for f-divergence references an improved version~\citep{wang2024beyond}, the definitions for forward-KL and reverse-KL are swapped compared to Wikipedia. For all divergences, the generator function $f: \mathbb{R}^+ \to \mathbb{R} \cup \{+\infty\}$ is strictly convex, lower-semicontinuous, and satisfies $f(1)=0$. Forward-KL and reverse-KL are the special case of alpha divergence. }
\label{tab:f_divergences}
\footnotesize
\begin{tabularx}{\textwidth}{@{}lXl@{}}
\toprule
\textbf{Name} & \textbf{Divergence Definition $D_f(p\|q)$} & \textbf{Generator $f(u)$} \\
\midrule
Reverse-KL($\alpha=0$) & $\int p(x) \log \frac{p(x)}{q(x)} dx$ & $u \log u$ \\
\addlinespace
Forward-KL($\alpha=1$) & $\int q(x) \log \frac{q(x)}{p(x)} dx$ & $-\log u$ \\
\addlinespace
$\alpha$-divergence ($\alpha \notin \{0,1\}$) & 
\begin{tabular}[t]{@{}l@{}}
$\frac{1}{\alpha(\alpha-1)} \int q(x) \Biggl[ \left( \frac{p(x)}{q(x)} \right)^\alpha$ \\
$\qquad - \alpha \left( \frac{p(x)}{q(x)} \right) - 1 \Biggl] dx$
\end{tabular}
& $(u^{1-\alpha}-(1-\alpha)u-\alpha)/(\alpha(\alpha-1))$ \\
\addlinespace
Jensen-Shannon &
\begin{tabular}[t]{@{}l@{}}
$\frac{1}{2} \int \Biggl( p(x) \log \frac{2p(x)}{p(x)+q(x)}$ \\
$\qquad + q(x) \log \frac{2q(x)}{p(x)+q(x)} \Biggl) dx$
\end{tabular}
& $\frac{u}{2} \log u - \frac{(u+1)}{2}\log\frac{u+1}{2} $ \\
\bottomrule
\end{tabularx}
\end{table}

\section{Theoretical Connection to Experience Replay and Regularized Rehearsal}
\label{app:replay_connection}

In this section, we elaborate on the theoretical interpretation of DPH-RL as a hybrid method rooted in replay mechanisms. We demonstrate that while our method incorporates a replay buffer component, it significantly extends standard Experience Replay (ER) through the introduction of symmetric divergences.

\subsection{DPH-F as Divergence-Based Experience Replay}

Minimizing the Forward-KL divergence, defined as $\mathbb{E}_{x \sim \pi_{ref}} [-\log \pi_{\theta}(x)]$, is mathematically equivalent to maximizing the log-likelihood of trajectories sampled from the reference policy $\pi_{ref}$. In our framework, $\mathcal{D}_{pef}$ serves as a curated buffer containing ``perfect'' trajectories. Consequently, the DPH-F objective acts effectively as an Experience Replay mechanism. By enforcing probability mass coverage on these high-quality solutions, it prevents the model from catastrophically forgetting previously mastered skills while optimizing rewards on the exploration set $\mathcal{D}_{exp}$.

\subsection{DPH-JS as Regularized Rehearsal via Dual-Sided Sampling}

While DPH-F acts as direct rehearsal, DPH-JS (Jensen-Shannon) introduces a more nuanced mechanism. Since the JS divergence is symmetric and defined based on a mixture distribution $M = (\pi_{ref} + \pi_{\theta})/2$, its optimization necessitates \textbf{dual-sided sampling}.

Unlike standard replay which only samples from the historical buffer, minimizing DPH-JS involves comparing samples from two sources:
\begin{itemize}
    \item \textbf{The Reference Buffer ($\pi_{ref}$):} Anchoring the model to the valid distribution to prevent collapse.
    \item \textbf{The Current Policy ($\pi_{\theta}$):} Sampling self-generated trajectories to regulate the model's deviation from the mixture mean.
\end{itemize}

This dual dependency makes DPH-JS a form of \textbf{``Regularized Rehearsal.''} It anchors the model to the reference distribution (similar to ER) to ensure diversity, but simultaneously regulates the model's own exploration. This prevents the model from drifting too far into unstable regions without forcing it to strictly ``memorize'' the buffer as rigidly as Forward-KL or SFT might.

\section{RLVR Algorithms}
\label{appendix:rlvr algo}
\subsection{Group Relative Policy Optimization (GRPO)}

GRPO presents an innovative approach to policy learning that distinguishes itself from methods like Proximal Policy Optimization (PPO) by eliminating the need for an explicit value function. Instead, GRPO computes the advantage in a group-relative manner, offering a streamlined yet effective optimization strategy.

For a specific question-answer pair $(q,a)$, GRPO's underlying behavior policy, $\pi_{\theta_\text{old}}$, generates a group of $G$ individual responses, denoted as $\{ o_i\}_{i=1}^G$. The advantage for the $i$-th response within this ensemble is then precisely calculated by normalizing the rewards specific to that group, $\{ R_i \}_{i=1}^G$:
\begin{equation}
\hat{A}_{i,t} = \frac{r_i - \text{mean}(\{R_i\}_{i=1}^G)}{\text{std}(\{R_i\}_{i=1}^G)}.
\end{equation}

Similar to the clipped surrogate objective found in PPO, GRPO also incorporates a clipping mechanism to constrain policy updates. This helps maintain training stability and improve sample efficiency by ensuring that new policies don't deviate too drastically from previous ones. Beyond this, GRPO further enhances regularization by directly adding a Kullback-Leibler (KL) divergence penalty term to its objective function. This penalty helps prevent the policy from drifting too far from a reference policy, promoting stable and controlled learning.

The comprehensive objective function for GRPO is articulated as:
\begin{equation}
\label{eq:grpoloss_compact}
\begin{aligned}
\mathcal{L}\text{GRPO}(\theta) = -\mathbb{E}_{\substack{(q,a)\sim \mathcal{D} \\ {o_i}\sim \pi_{\theta_{\text{old}}}(\cdot\mid q)}}
\Bigg[ & \frac{1}{G}\sum_{i=1}^{G} \frac{1}{|o_i|}\sum_{t=1}^{|o_i|} \Bigg( \\
& \quad \min \left( \rho_{i,t}(\theta) \hat{A}_{i,t},
\ \text{clip} \left( \rho_{i,t}(\theta), 1 - \varepsilon, 1 + \varepsilon \right) \hat{A}_{i,t} \right) \\
& \quad - \eta D_{\text{KL}}(\pi_{\theta} || \pi_{\text{ref}})
\Bigg) \Bigg],
\end{aligned}
\end{equation}
where $\eta$ is a coefficient for the KL penalty, $D_{\text{KL}}(\pi_{\theta} || \pi_{\text{ref}})$ quantifies the KL divergence between the current policy $\pi_\theta$ and a specified reference policy $\pi_\text{ref}$. The term $\rho_{i,t}(\theta)$ represents the importance sampling ratio for the $i$-th response at time $t$, which is defined as:
\begin{equation}
\rho_{i,t}(\theta)=\frac{\pi_{\theta}(o_{i,t} \mid q, o_{i,<t})}{\pi_{\theta_{\text{old}}}(o_{i,t} \mid q,o_{i,<t})}.
\end{equation}

A critical design choice in GRPO is its sample-level objective computation. Specifically, the loss is first averaged within each generated sequence, and subsequently, these sequence-level losses are averaged across various samples. This distinct computational approach, particularly when compared to token-level optimizations, can significantly influence the algorithm's empirical performance.

Traditional GRPO applies a single, unified loss function to all training samples. This loss includes a KL divergence penalty, typically in the form of reverse KL divergence, which aims to keep the fine-tuned policy close to the base policy. This penalty is added directly to the policy gradient loss. When applied to our experimental setup, the KL penalty for samples from $\mathcal{D}_{\text{exp}}$ would be:

\begin{equation}
\label{eq:grpo_kl_penalty}
\mathcal{L}_{\text{KL-GRPO}} = \mathbb{E}_{q \sim \mathcal{D}_{\text{exp}}} \left[ -\beta D_{\text{KL}}(\pi_{\theta}|| \pi_{\text{base}}) \right],
\end{equation}

where the reverse KL divergence is defined as:
\begin{equation}
\label{eq:reverse_kl_divergence}
D_{\text{KL}}(\pi_{\theta}|| \pi_{\text{base}}) = \sum_x \pi_{\theta}(x)\log\left(\frac{\pi_{\theta}(x)}{\pi_{\text{base}}(x)}\right).
\end{equation}

Here, the term $D_{\text{KL}}(\pi_{\theta} || \pi_{\text{base}})$ represents the \textbf{Reverse KL Divergence}. It measures the information lost when the base policy $\pi_{\text{base}}$ approximates the new policy $\pi_{\theta}$. This choice heavily penalizes the new policy $\pi_{\theta}$ for exploring actions that the base policy $\pi_{\text{base}}$ considers to have a low probability. In other words, if the new policy $\pi_{\theta}$ assigns a high probability to an action where the base policy $\pi_{\text{base}}$ assigns a near-zero probability, this divergence will be very large. The effect is to strongly encourage the new policy $\pi_{\theta}$ to stick to the modes (high-probability regions) of the original base policy $\pi_{\text{base}}$, which restricts the exploration space.

\subsection{Dynamic Sampling Policy Optimization (DAPO) } 
DAPO is an enhancement of the GRPO algorithm, incorporating several key improvements. DAPO eliminates the KL penalty and refines the clipping mechanism, changing the upper bound from $(1+\varepsilon)$ to a fixed value of $(1+\varepsilon_{\text{upper}})$, where $\varepsilon_{\text{upper}}$ is set to 0.28. A core innovation of DAPO is its dynamic sampling mechanism, which moves beyond the ``all-or-nothing" approach to sampling. Additionally, the algorithm applies a token-level policy gradient loss and uses an overlong reward shaping technique.

\vspace{-3pt}
\section{Method for Divergence Definition}
\vspace{-3pt}
\label{definition}
For f-divergences like forward KL, the Divergence Definition and Generator implementations are equivalent. However, for divergences such as JS divergence, which require sampling from both the reference policy $\pi_{ref}$ and the new policy $\pi_{\theta}$:
\begin{equation}
\label{eq:js_divergence}
\text{JS}(\pi_{\text{base}}||\pi_{\theta}) = \frac{1}{2} D_{\text{KL}}\left(\pi_{\text{base}}||\frac{\pi_{\text{base}}+\pi_{\theta}}{2}\right) + \frac{1}{2} D_{\text{KL}}\left(\pi_{\theta}||\frac{\pi_{\text{base}}+\pi_{\theta}}{2}\right)
\end{equation}

Since the JS divergence is composed of two parts, we use data from the $\mathcal{D}_{\text{pef}}$ loss to compute the first part $D_{\text{KL}}\left(\pi_{\text{base}}||\frac{\pi_{\text{base}}+\pi_{\theta}}{2}\right)$. For the training on the $\mathcal{D}_{\text{exp}}$ loss, we introduce a reference model to calculate the value of the second part $D_{\text{KL}}\left(\pi_{\theta}||\frac{\pi_{\text{base}}+\pi_{\theta}}{2}\right)$.

The loss for DPH-JS on the $\mathcal{D}_{\text{exp}}$ can be written as:
\begin{equation}
\label{eq:dphrl_exp}
\begin{aligned}
\mathcal{L}_{\text{DPH-exp}}(\theta) = -\mathbb{E}_{\substack{(q,a)\sim \mathcal{D}_{\text{exp}} \\ o_i\sim \pi_{\theta_{\text{old}}}(\cdot\mid q)}} \Bigg[ & \frac{1}{G}\sum_{i=1}^{G} \frac{1}{|o_i|}\sum_{t=1}^{|o_i|} \min \left( \rho_{i,t}(\theta) \hat{A}_{i,t}, \ \text{clip} \left( \rho_{i,t}(\theta), 1 - \varepsilon, 1 + \varepsilon \right) \hat{A}_{i,t} \right) \\
& - \beta_{1} D_{\text{KL}}\left(\pi_{\theta}||\frac{\pi_{\text{base}}+\pi_{\theta}}{2}\right) \Bigg]
\end{aligned}
\end{equation}

By adjusting the value of $\beta_{1}$, we can make the values of the two JS components within the same batch as close as possible, thereby achieving balance during training. $\beta_{1}$ is not a hyperparameter; it only depends on the dataset size and $\eta$. Its formula is:
\begin{equation}
\beta_{1} = \eta * \frac{|\mathcal{D}_{\text{pef}}|}{|\mathcal{D}_{\text{exp}}|}
\end{equation}

The loss for DPH-JS on the $\mathcal{D}_{\text{pef}}$ can be expressed as:
\begin{equation}
\mathcal{L}_{\text{DPH-pef}} = \mathbb{E}_{q \sim \mathcal{D}_{\text{pef}}} \left[  D_{\text{KL}}\left(\pi_{\text{base}}||\frac{\pi_{\text{base}}+\pi_{\theta}}{2}\right) \right]
\end{equation}
\section{Training Details}
\label{training_details}
\subsection{Task Settings}
For the SQL task, we used the Llama-3.1-8B-Instruct and OmniSQL32B models. We performed RL training exclusively on the BIRD dataset~\cite{li2024bird} and then evaluated the models on both the BIRD and Spider (Cross-Domain)~\cite{yu2018spider} datasets. To validate the generalization of our method, we also tested it on a mathematical reasoning task, treating it as an out-of-domain (OOD) evaluation.

For the mathematical reasoning task, we used the Llama-3.1-8B-Instruct and Qwen2.5-Math-7B models. We conducted RL training on a DAPO-17K subset, having first filtered out problems that initially had a Pass@8 score of 0. For testing, we used seven different datasets: AIME24~\cite{AIME_AMC}, AIME25, AMC23~\cite{AIME_AMC}, Math500~\cite{MATH}, Olympiad~\cite{OlympiadBench}, Minerva~\cite{MinervaMath}, and College Math~\cite{College_math}.

\paragraph{Evaluation}
For the SQL tasks, the model will generate an SQL statement, and then we call the SQL executor to get the result. This result is typically a tuple, such as [(string, int), (string, int)]. We compute the Cartesian product of the prediction and the ground truth. For the math tasks, we use the official Qwen2.5 evaluation tool\footnote{\url{https://github.com/QwenLM/Qwen2.5-Math}} for a detailed assessment, as this tool provides tailored evaluations for each dataset.

\subsection{Method Settings}
\label{appendix:model_compare}
\begin{table}[ht]
    \centering
    \footnotesize 
    \caption{Configuration Comparison of RL Methods}
    \label{tab:method_comparison}
    \begin{tabular}{lcccccc} 
        \toprule
        \textbf{Method} & \textbf{Online Data} & \textbf{Offline Data} & \textbf{Offline Loss} & \textbf{$\eta$} & \textbf{$\epsilon_\text{high}$} & \textbf{Dynamic Sampling} \\
        \midrule
        GRPO & $\mathcal{D}_{\text{exp}}$,$\mathcal{D}_{\text{pef}}$ & - & - & - & 0.2 & $\times$ \\
        DAPO & $\mathcal{D}_{\text{exp}}$,$\mathcal{D}_{\text{pef}}$ & - & - & - & 0.28 & $\checkmark$ \\
        RKL & $\mathcal{D}_{\text{exp}}$,$\mathcal{D}_{\text{pef}}$ & - & - & 0.01 & 0.28 & $\checkmark$ \\
        $\alpha$-divergence & $\mathcal{D}_{\text{exp}}$ & $\mathcal{D}_{\text{pef}}$ & $\alpha$-divergence & 0.01 & 0.28 & $\checkmark$ \\
        DPH-F & $\mathcal{D}_{\text{exp}}$ & $\mathcal{D}_{\text{pef}}$ & Forward KL & 0.01 & 0.28 & $\checkmark$ \\
        DPH-JS & $\mathcal{D}_{\text{exp}}$ & $\mathcal{D}_{\text{pef}}$ & JS divergence & 0.2 & 0.28 & $\checkmark$ \\
        \bottomrule
    \end{tabular}
\end{table}

The implementation of RKL is consistent with the standard GRPO implementation. Our default implementation for DPH-RL uses the \textbf{Generator} method. The settings for our different methods are detailed in Table~\ref{tab:method_comparison}. For all RL algorithms, we consistently use a token-level loss. In most of our experiments, the $\epsilon_\text{high}$ for our DPH methods was set to 0.28. However, due to training instability in the Llama math experiments, we set the $\epsilon_\text{high}$ to 0.2 for all methods.

\subsection{Hyperparameters}
\label{append:parameters}
\begin{table}[ht]
    \centering
    \caption{Hyperparameters for RL Training}
    \label{parameter_table}
    \scriptsize
    \begin{tabular}{l|cccc}
        \toprule
        \textbf{Hyperparameter} & \textbf{Llama-SQL} & \textbf{OmniSQL-32B-SQL} & \textbf{Llama-Math} & \textbf{Qwen2.5-Math} \\
        \midrule
        Batch Size & 128 & 128 & 128 & 128 \\
        Learning Rate & 1e-6 & 1e-6 & 2e-7 & 2e-7 \\
        Rollout Temperature & 1.0 & 1.0 & 1.0 & 1.0 \\
        Rollout Top-p & 0.95 & 0.95 & 0.95 & 0.95 \\
        Validation Temperature & 1.0 & 1.0 & 0.6 & 0.6 \\
        Validation Top-p & 0.95 & 0.95 & 0.95 & 0.95 \\
        PPO Epochs & 1 & 1 & 1 & 1 \\
        Size of $\mathcal{D}_{\text{exp}}$ & 6710 & 5169 & 4599 & 7009 \\
        Size of $\mathcal{D}_{\text{pef}}$ & 2248 & 3789 & 1248 & 3199 \\
        Max Response Length & 4096 & 4096 & 2048 & 2048 \\
        Number of Rollouts & 16 & 16 & 16 & 16 \\
        Training Epochs & 4 & 4 & 8 & 8 \\
        \bottomrule
    \end{tabular}
\end{table}
All experiments were conducted on 32 NVIDIA A800-80G GPUs using the VeRL framework for our RL algorithm implementations. For all experiments, we set the number of rollouts to 16. In our setup, the training batch size was 128, and the batch size for the $D_{\text{pef}}$ data was 256. This resulted in using 2048 samples from $D_{\text{exp}}$ and 256 samples from $D_{\text{pef}}$ at each learning step, yielding an effective ratio of 8:1. A comprehensive list of the specific hyperparameters used is provided in Table~\ref{parameter_table}.

In the offline phase, we used the current model to perform 8 rollouts on the training data. For the SQL tasks, we separated the perfectly correct examples from the rollouts and used the remaining ones to construct $\mathcal{D}_{\text{exp}}$. For the DAPO-17k dataset used in the math tasks, we discarded all data where the model's responses were entirely incorrect.

\section{Additional Experiments}

\subsection{32B Experiments for SQL}
\begin{table*}[htpb]
\centering
\caption{Results on Omnisql-32B for SQL tasks. * Indicates that the results are from the Bird Bench.}

\label{omnisql-32b}
\scriptsize
\begin{adjustbox}{max width=\textwidth}
\begin{tabular}{lcccccc}
\toprule
\multirow{2}{*}{\textbf{Model}} & \multicolumn{3}{c}{\textbf{Bird}} & \multicolumn{3}{c}{\textbf{Spider}} \\
\cmidrule(lr){2-4} \cmidrule(lr){5-7}
& \textbf{Greedy} & \textbf{Pass@8} & \textbf{Pass@16} & \textbf{Greedy} & \textbf{Pass@8} & \textbf{Pass@16} \\
\midrule

\multicolumn{7}{c}{\textbf{OmniSQL-32B}} \\
\midrule
Base Model & 65.0 & 77.3 & 79.8 & 85.0 & 92.8 & 93.6 \\
\midrule
\quad \textit{Other Models} \\
Infly-RL-SQL-32B* & 70.1& - & - & - & - & - \\
XiYanSQL-QwenCoder-32B* & 67.0 &  - & - & - & - & - \\
Arctic-ExCoT-70B*&68.5 \\
Command A*& 63.5 \\ 
\midrule
\quad \textit{RL Models} \\ 
GRPO & 69.4 & 76.3 & 78.8 & 84.7 & 91.6 & 92.4 \\
DAPO & 69.9 & 76.8 & 79.2 & 83.8 & 91.0 & 91.7 \\
DPH-F & 70.4 & 78.6 & 80.8 & \colorbox{mine}{86.1} & \colorbox{mine}{92.7} & \colorbox{mine}{93.3} \\
DPH-JS & \colorbox{mine}{70.5} & \colorbox{mine}{79.2} & \colorbox{mine}{81.9} & 84.9 & 91.9 & 93.1 \\
\bottomrule
\end{tabular}
\end{adjustbox}
\end{table*}
We also tested the effectiveness of DPH-F and DPH-JS on a larger 32B model and found them to be equally effective. As shown in Table~\ref{omnisql-32b}, please note that a greedy score of 70.5 on the Bird dataset is already approaching the performance limit of open-source Single-Models\footnote{\url{https://bird-bench.github.io/}}.

\subsection{Analysis of Training Progress}
\label{app:training_progress}
\begin{figure}[htbp]
    \centering
    \begin{subfigure}[b]{0.48\textwidth}
        \centering
        \includegraphics[width=\textwidth]{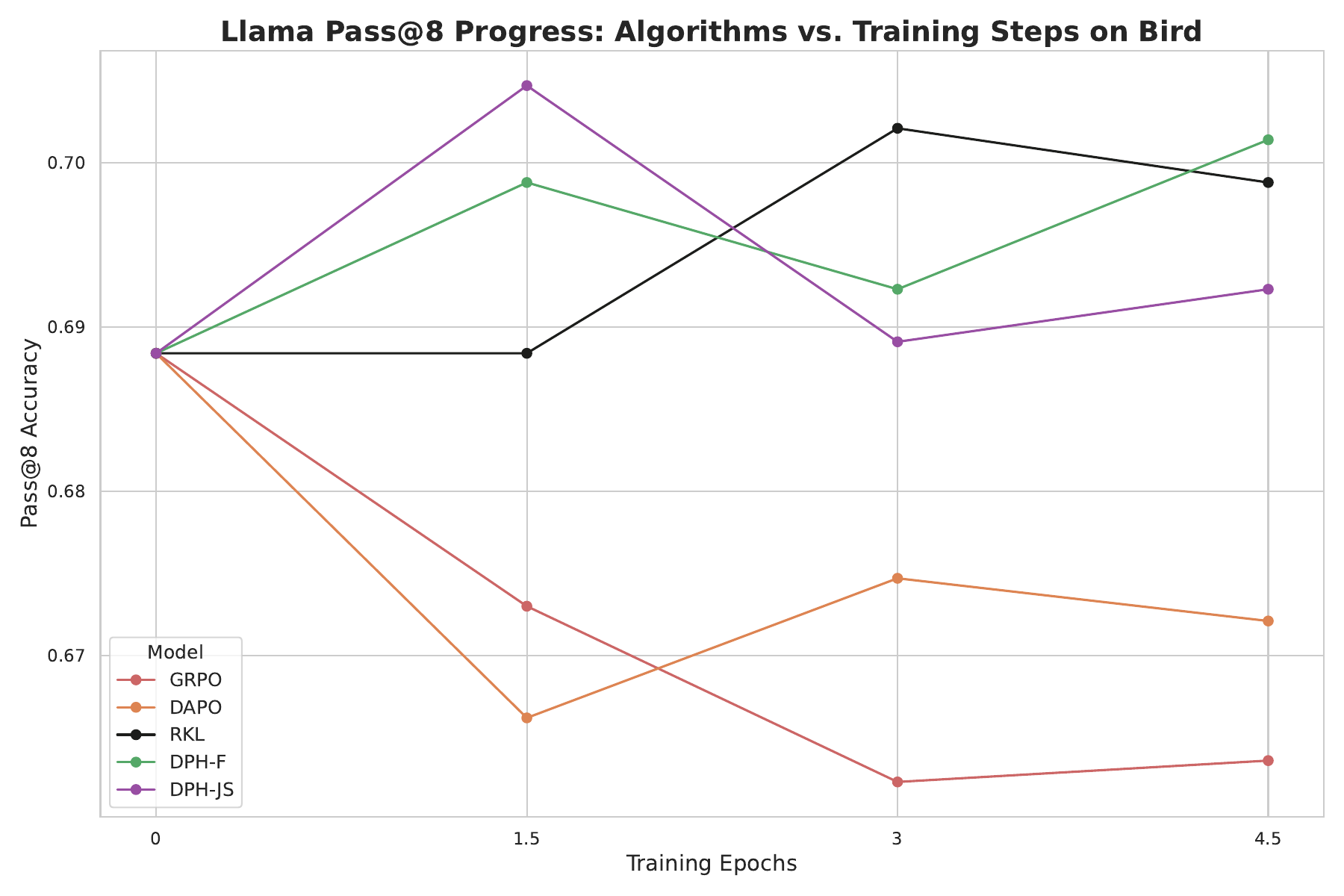}
        \caption{Llama Bird}
        \label{fig:sub-1}
    \end{subfigure}
    \hfill 
    \begin{subfigure}[b]{0.48\textwidth}
        \centering
        \includegraphics[width=\textwidth]{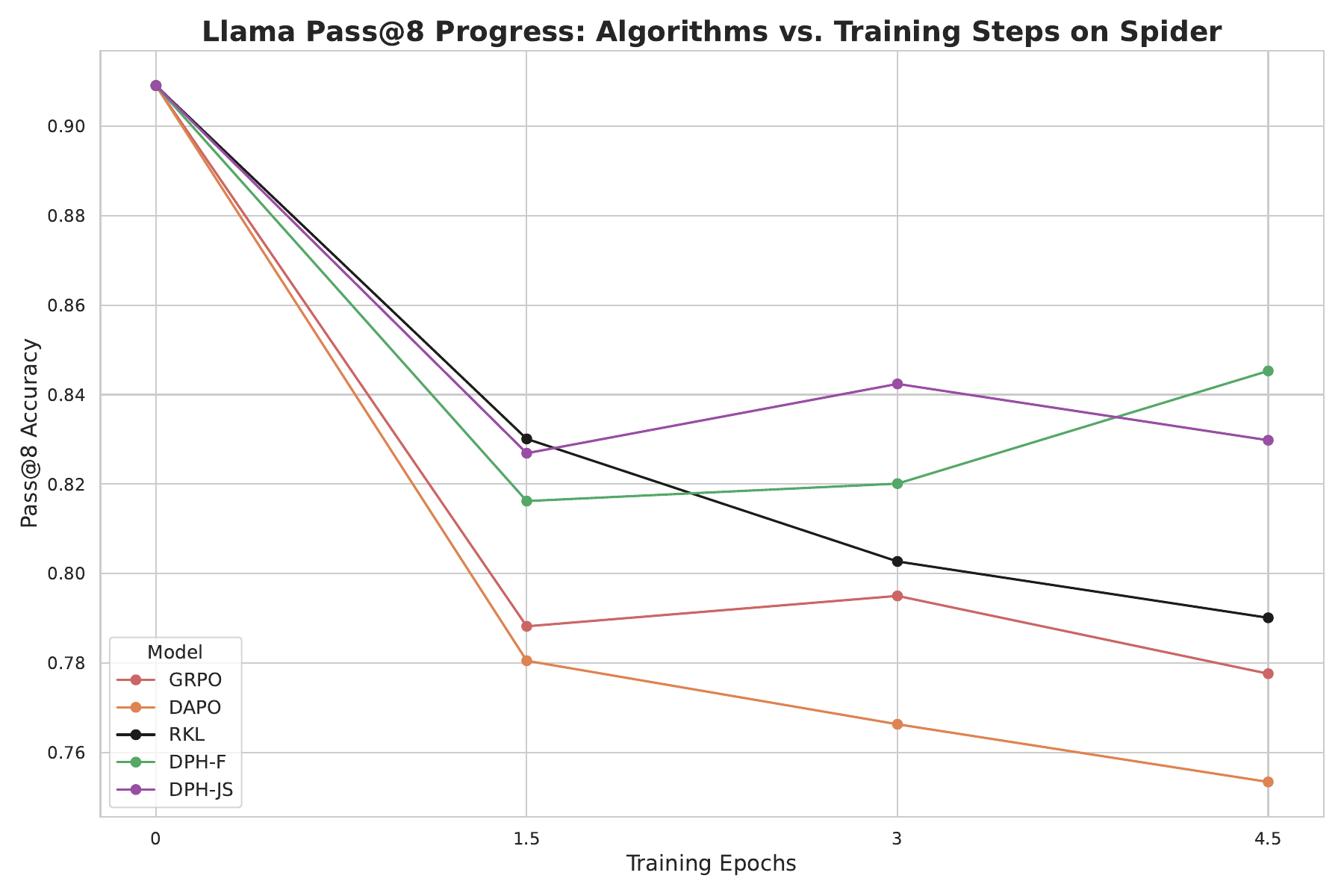}
        \caption{Llama Spider}
        \label{fig:sub-2}
    \end{subfigure}
    \caption{Llama Pass@8 progress: Algorithms vs. Training Steps}
    \label{figs:training_process}
\end{figure}

In Figure~\ref{figs:training_process}, we visualize the evolution of the Llama model's Pass@8 metric during training. On the Bird dataset, GRPO shows a clear, gradual collapse, while DAPO's performance continuously oscillates. Over the long term, DPH-F and DPH-JS both consistently maintain a higher Pass@8 value than the initial level. On the Spider dataset, DAPO performs extremely poorly, even falling below GRPO. Our experiments show that DAPO sacrifices out-of-domain generalization for improved performance on in-domain datasets. While RKL performs well on the Bird dataset, its performance also shows a gradual collapse on the Spider dataset as training progresses.

\subsection{Error Bar}

\begin{figure}[]

  \centering
  \includegraphics[width=0.6\textwidth]{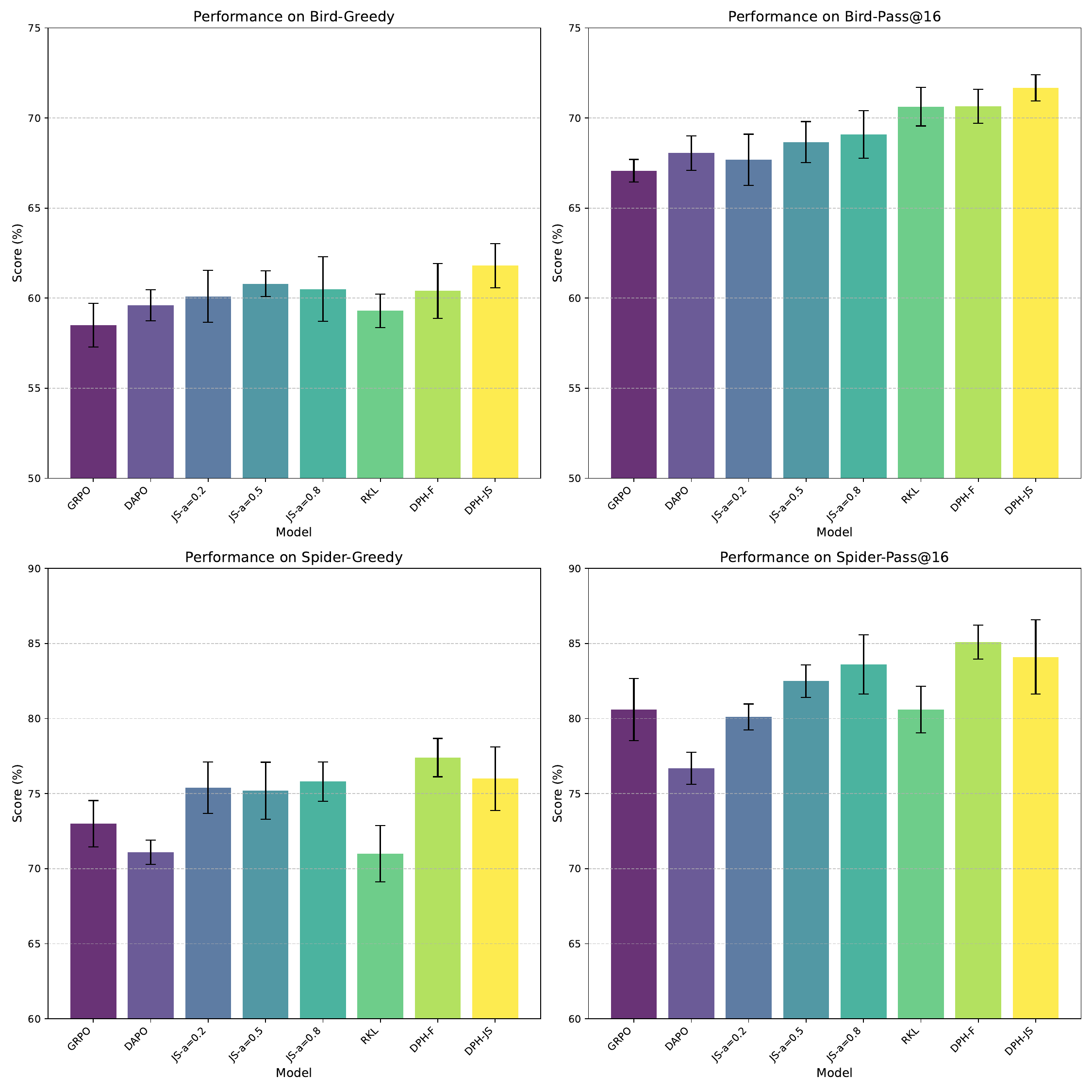}
  \caption{Error bar in Llama Sql. For each method in our Llama SQL experiments, we conducted three separate reinforcement learning training runs. We then selected the model that achieved the highest pass@16 score on the Bird.
  }
  \label{fig:error_bar}
\end{figure}

\subsection{Mean@k Metric for MATHEMATICAL REASONING TASK}

\begin{table*}[h]
\centering
\caption{The Mean@k metric for models trained on math datasets. To accelerate training, we filtered the DAPO-17k dataset for data with a Pass@8 score of 0. In the end, we trained Llama on 5.8k samples and Qwen on 10.2k samples.} 
\label{Table4}
\scriptsize
\adjustbox{max width=\linewidth}{
\renewcommand{\arraystretch}{1.1} 
\begin{tabular}{lcccccccc}
\toprule
\multirow{2}{*}{\textbf{Model}} & \multicolumn{3}{c}{\textbf{Mean@64}} & \multicolumn{3}{c}{\textbf{Mean@16}} & \textbf{Mean@8} & \multirow{2}{*}{\textbf{Avg}} \\
\cmidrule(lr){2-4} \cmidrule(lr){5-7} \cmidrule(lr){8-8}
& AIME24 & AIME25 & AMC23 & Math500 & Olympiad & Minerva & College Math \\
\midrule
\multicolumn{9}{c}{\textbf{Llama-3.1-8B-Instruct}} \\
\midrule
Base Model & 5.5 & 0.5 & 23.0 & 46.5 & 15.2 & 22.1 & 27.3 & 20.01 \\
RL Models \\
\quad-GRPO & 5.6 & \colorbox{mine}{0.6} & 23.3 & 47.7 & 14.9 & 25.4 & 29.1 & 20.94 \\
\quad-RKL & 6.4 & 0.2 & 22.3 & 45.6 & 13.0 & 24.8 & 28.6 & 20.13 \\
\quad-DPH-F & 6.2 & 0.5 & 23.8 & 48.1 & 15.5 & 25.4 & 29.7 & 21.31 \\
\quad-DPH-JS & \colorbox{mine}{8.4} & 0.5 & \colorbox{mine}{24.2} & \colorbox{mine}{48.4} & \colorbox{mine}{15.6} & \colorbox{mine}{27.4} & \colorbox{mine}{29.9} & \colorbox{mine}{22.06} \\
\midrule
\multicolumn{9}{c}{\textbf{Qwen2.5-Math-7B}} \\
\midrule
Base Model & 12.7 & 6.6 & 32.2 & 48.9 & 22.8 & 13.3 & 19.3 & 22.26 \\
RL Models \\
\quad-GRPO & 25.8 & 10.9 & 60.6 & 75.5 & 37.8 & 38.8 & 40.6 & 41.43 \\
\quad-DAPO & 25.9 & 11.1 & 61.3 & 76.3 & 37.6 & \colorbox{mine}{39.0} & 40.7 & 41.70 \\
\quad-RKL & 25.8 & 10.3 & 61.5 & 75.3 & 36.7 & 34.6 & 39.9 & 40.59 \\
\quad-DPH-F & 26.0 & 10.6 & 60.4 & 76.4 & 37.3 & 38.1 & 40.9 & 41.38 \\
\quad-DPH-JS & \colorbox{mine}{26.6} & \colorbox{mine}{11.2} & \colorbox{mine}{62.7} & \colorbox{mine}{76.5} & \colorbox{mine}{38.5} & 38.9 & \colorbox{mine}{41.9} & \colorbox{mine}{42.33} \\
\bottomrule
\end{tabular}
}
\end{table*}

To verify that our reinforcement learning training was effective on the math tasks, we evaluated the performance using the mean@k metric in Table~\ref{Table4}.

\section{Case Study}
We have listed the use cases and prompts for the two tasks in detail to facilitate future reproduction.

\subsection{SQL Task}


\begin{tcolorbox}[title=\textbf{Case Study-SQL},colframe=purple!55!, breakable]
\colorbox{yellow!30}{System Prompt} \\
You are Qwen, created by Alibaba Cloud. You are a helpful assistant.\\
\colorbox{yellow!30}{User Prompt} \\
Task Overview:\\
You are a data science expert. Below, you are provided with a database schema and a natural language question. Your task is to understand the schema and generate a valid SQL query to answer the question.\\
Database Engine:\\
SQLite\\
Database Schema:\\
CREATE TABLE frpm (\\
CDSCode text, -- example: ['01100170109835', '01100170112607']\\
`Academic Year` text, -- example: ['2014-2015']\\
`County Code` text, -- example: ['01', '02']\\
`Free Meal Count (K-12)` real, -- example: [565.0, 186.0]\\
`Percent (\%) Eligible Free (K-12)` real, -- example: [0.519779208831647, 0.470886075949367]\\
 `2013-14 CALPADS Fall 1 Certification Status` integer, -- example: [1]\\
 PRIMARY KEY (CDSCode),\\
 CONSTRAINT fk\_frpm\_cdscode FOREIGN KEY (CDSCode) REFERENCES schools (CDSCode)\\
);\\
CREATE TABLE....
This schema describes the database's structure, including tables, columns, primary keys, foreign keys, and any relevant relationships or constraints.\\
Question:\\
Eligible free rate for K-12 = `Free Meal Count (K-12)` / `Enrollment (K-12)`.
What is the highest eligible free rate for K-12 students in the schools in Alameda County?\\
Instructions:\\
- Make sure you only output the information that is asked in the question. If the question asks for a specific column, make sure to only include that column in the SELECT clause, nothing more.\\
- The generated query should return all of the information asked in the question without any missing or extra information.\\
- Before generating the final SQL query, please think through the steps of how to write the query.\\

Output Format:\\
In your answer, please enclose the generated SQL query in a code block:\\
\begin{verbatim}
```sql
-- Your SQL query
```
\end{verbatim}

Take a deep breath and think step by step to find the correct SQL query.

\colorbox{yellow!30}{Response}\\
To find the codes of the schools with a total enrollment of over 500, we need to follow these steps:

1.  Identify the table that contains the enrollment data: The `frpm` table contains the `Enrollment (K-12)` and `Enrollment (Ages 5-17)` columns, which represent the enrollment data.
2.  Identify the relevant columns: We need to sum the `Enrollment (K-12)` and `Enrollment (Ages 5-17)` columns to get the total enrollment.
3.  Identify the filter condition: We need to filter the schools with a total enrollment of over 500.
4.  Write the SQL query: We can use a `SELECT` statement to retrieve the `CDSCode` column from the `frpm` table, filtering the schools where the sum of `Enrollment (K-12)` and `Enrollment (Ages 5-17)` is greater than 500.

Based on these steps, the SQL query to find the codes of the schools with a total enrollment of over 500 is:
\begin{verbatim}
```sql
SELECT frpm.CDSCode
FROM frpm
WHERE frpm.`Enrollment (K-12)` + frpm.
`Enrollment (Ages 5-17)` > 500;
```
\end{verbatim}
This query sums the `Enrollment (K-12)` and `Enrollment (Ages 5-17)` columns and filters the schools with a total enrollment of over 500, returning the `CDSCode` column.

\colorbox{yellow!30}{Execution results}
\begin{verbatim}
[['01100170109835'], ['01100170112607'], ['01100170124172']]
\end{verbatim}

\end{tcolorbox}

\subsection{Math Task}
\begin{tcolorbox}[title=\textbf{Case Study-Math},colframe=purple!55!, breakable]
\colorbox{yellow!30}{System Prompt} \\
Please reason step by step, and put your final answer within \textbackslash \textbackslash boxed\{\{\}\}.\\
\colorbox{yellow!30}{User Prompt} \\
Every morning Aya goes for a $9$-kilometer-long walk and stops at a coffee shop afterwards. When she walks at a constant speed of $s$ kilometers per hour, the walk takes her 4 hours, including $t$ minutes spent in the coffee shop. When she walks $s+2$ kilometers per hour, the walk takes her 2 hours and 24 minutes, including $t$ minutes spent in the coffee shop. Suppose Aya walks at $s+\frac{1}{2}$ kilometers per hour. Find the number of minutes the walk takes her, including the $t$ minutes spent in the coffee shop.\\
\colorbox{yellow!30}{Response}\\
To solve this problem, we need to first determine the value of \(s\) and \(t\). We know the following:

1. When Aya walks at \(s\) km/h, the total time including the coffee shop visit is 4 hours.
2. When Aya walks at \(s+2\) km/h, the total time including the coffee shop visit is 2 hours and 24 minutes (which is 2.4 hours).

First, we can set up the equations for the walking times:

1. For the first scenario:
\[ \frac{9}{s} + \frac{t}{60} = 4 \]

2. For the second scenario:
\[ \frac{9}{s+2} + \frac{t}{60} = 2.4 \]

We can solve these equations to find \(s\) and \(t\).

Let's subtract the second equation from the first to eliminate \(t\):

\[ \frac{9}{s} - \frac{9}{s+2} = 1.6 \]

This simplifies to:
\[ \frac{9(s+2) - 9s}{s(s+2)} = 1.6 \]
\[ \frac{18}{s(s+2)} = 1.6 \]
\[ 18 = 1.6s(s+2) \]
\[ 18 = 1.6s^2 + 3.2s \]
\[ 1.6s^2 + 3.2s - 18 = 0 \]

We can solve this quadratic equation using the quadratic formula \(s = \frac{-b \pm \sqrt{b^2 - 4ac}}{2a}\), where \(a = 1.6\), \(b = 3.2\), and \(c = -18\):

\[ s = \frac{-3.2 \pm \sqrt{3.2^2 - 4 \cdot 1.6 \cdot (-18)}}{2 \cdot 1.6} \]
\[ s = \frac{-3.2 \pm \sqrt{10.24 + 115.2}}{3.2} \]
\[ s = \frac{-3.2 \pm \sqrt{125.44}}{3.2} \]
\[ s = \frac{-3.2 \pm 11.2}{3.2} \]

This gives us two possible solutions:
\[ s = \frac{8}{3.2} = 2.5 \quad \text{(positive root)} \]
\[ s = \frac{-14.4}{3.2} = -4.5 \quad \text{(negative root, not physically meaningful)} \]

So, \(s = 2.5\) km/h. Now we can substitute \(s\) back into one of the original equations to find \(t\):

\[ \frac{9}{2.5} + \frac{t}{60} = 4 \]
\[ 3.6 + \frac{t}{60} = 4 \]
\[ \frac{t}{60} = 0.4 \]
\[ t = 24 \]

So, \(t = 24\) minutes. Now we need to find out how long the walk takes when Aya walks at \(s + \frac{1}{2} = 2.5 + 0.5 = 3\) km/h. The total time including the coffee shop visit will be:

\[ \frac{9}{3} + \frac{24}{60} = 3 + 0.4 = 3.4 \text{ hours} \]

Converting 3.4 hours to minutes, we get:

\[ 3.4 \times 60 = 204 \text{ minutes} \]

So the final answer is \( \boxed{204} \).

\end{tcolorbox}

\newpage
\section{Proof of Theorem 1}\label{app:proof}

The proof of Theorem 1 relies on bounding the total variation distance by various f-divergences. The following lemma, which is a direct consequence of the well-known Pinsker's inequality and its generalizations, establishes these relationships.

\begin{lemma}\label{lemma:pinsker_bounds}
For any two probability distributions $P$ and $Q$ defined on the same measurable space, the squared total variation distance, $D_{TV}^2(P, Q)$, is bounded by the Kullback-Leibler (KL), $\alpha$-, and Jensen-Shannon (JS) divergences as follows:
$$ D_{TV}^2(P, Q) \leq C \cdot D_f(P \| Q) $$
where $D_f$ is one of the specified divergences and $C$ is a positive constant that depends on the choice of $f$.
\end{lemma}

\begin{proof}
The proof consists of recalling the standard Pinsker-type inequalities \citep{fedotov2003refinements} for each specified f-divergence and arranging them into the desired form. We define the total variation distance as $D_{TV}(P, Q) = \frac{1}{2} \sum_x |P(x) - Q(x)|$.

\begin{itemize}
    \item \textbf{Kullback-Leibler (KL) Divergence:} The classical Pinsker's inequality states that
    $$ D_{KL}(P \| Q) \ge 2 D_{TV}^2(P, Q). $$
    Rearranging this directly yields the desired bound:
    $$ D_{TV}^2(P, Q) \le \frac{1}{2} D_{KL}(P \| Q). $$
    Here, $C = 1/2$.

    \item \textbf{$\alpha$-Divergence:} The relationship depends on the value of $\alpha$.
    \begin{itemize}
        \item For $\alpha > 1$, the $\alpha$-divergence is lower-bounded by the KL divergence. Consequently, the same constant applies:
        $$ D_{\alpha}(P \| Q) \ge D_{KL}(P \| Q) \ge 2 D_{TV}^2(P, Q) \implies D_{TV}^2(P, Q) \le \frac{1}{2} D_{\alpha}(P \| Q). $$
        \item For $\alpha \in (0, 1)$, a similar quadratic bound holds, of the form $D_{\alpha}(P \| Q) \ge C_\alpha D_{TV}^2(P, Q)$ for some constant $C_\alpha > 0$. For instance, the Hellinger distance, which corresponds to $\alpha=1/2$ (up to a factor), satisfies $H^2(P,Q) \ge 2 D_{TV}^2(P,Q)$.
    \end{itemize}
    In all cases, a suitable constant $C$ exists.

    \item \textbf{Jensen-Shannon (JS) Divergence:} For the JS divergence, there are two well-known lower bounds of this form.

 A commonly used quadratic inequality is:
    \begin{equation*}
        D_{\mathrm{JS}}(P \| Q) \ge \frac{1}{8} V(P, Q)^2
    \end{equation*}
 The tightest possible bound overall is non-quadratic:
    \begin{equation*}
         D_{\mathrm{JS}}(P \| Q) \ge \ln(2) - H_2\left(\frac{1+V(P,Q)}{2}\right)
    \end{equation*}
    where $H_2(p) = -p\ln(p) - (1-p)\ln(1-p)$. This provides the sharpest possible function for $g(V)$.

Thus, for each of the considered f-divergences, we have established the existence of a constant $C$ that satisfies the inequality. This concludes the proof.
    \end{itemize}
\end{proof}

Throughout the following proof,  we will repeatedly apply the lemma established in  \cite{schulman2015trust}.
\begin{lemma}[\cite{schulman2015trust}]\label{lemma:kakade}
    Given two policies $\pi$ and $\tilde{\pi}$, we have

    $$J(\tilde{\pi}) - J(\pi) = \mathbb{E}_{\tau \sim \tilde{\pi}} [\sum_{t=0}^{\infty}\gamma^tA_{\pi}(s_t,a_t)],$$
where the expectation is over the trajectories $\tau:=(s_0, a_0,s_1,a_1,...)$ and the notion $\mathbb{E}$ indicates that  $\tau$ are sampled from $\tilde{\pi}$ to generate $\tau$.
\end{lemma}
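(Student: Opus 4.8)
The plan is to prove this performance-difference identity directly by expanding the advantage function through its one-step Bellman relation and then exploiting a telescoping cancellation of the value terms. Writing $V_\pi$ and $Q_\pi$ for the value and action-value functions of $\pi$, the first step is to record that $A_\pi(s,a) = Q_\pi(s,a) - V_\pi(s) = r(s) + \gamma\,\mathbb{E}_{s'\sim P(\cdot|s,a)}[V_\pi(s')] - V_\pi(s)$. Substituting this into the right-hand side of the claimed identity gives
\[
\mathbb{E}_{\tau \sim \tilde{\pi}}\left[\sum_{t=0}^{\infty}\gamma^t A_\pi(s_t,a_t)\right]
= \mathbb{E}_{\tau \sim \tilde{\pi}}\left[\sum_{t=0}^{\infty}\gamma^t\Big(r(s_t) + \gamma\,\mathbb{E}_{s_{t+1}}[V_\pi(s_{t+1})] - V_\pi(s_t)\Big)\right],
\]
where the inner conditional expectation is taken under the transition kernel $P(\cdot|s_t,a_t)$.

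Next I would split the summand into a reward part and a value part. The reward part $\mathbb{E}_{\tau \sim \tilde{\pi}}\big[\sum_t \gamma^t r(s_t)\big]$ is exactly $J(\tilde{\pi})$ by the definition of the performance objective. For the value part I would first use the tower property to replace the inner conditional expectation $\gamma^{t+1}\mathbb{E}_{s_{t+1}}[V_\pi(s_{t+1})]$ with $\gamma^{t+1}V_\pi(s_{t+1})$ under the outer trajectory expectation, since along $\tau\sim\tilde{\pi}$ the state $s_{t+1}$ is drawn precisely from $P(\cdot|s_t,a_t)$. The remaining sum $\sum_{t=0}^{\infty}\big(\gamma^{t+1}V_\pi(s_{t+1}) - \gamma^t V_\pi(s_t)\big)$ then telescopes to $-V_\pi(s_0)$, the boundary term $\gamma^{T+1}V_\pi(s_{T+1})$ vanishing as $T\to\infty$ because $\gamma<1$ and $V_\pi$ is bounded. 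Taking the expectation of $-V_\pi(s_0)$ over $s_0\sim\rho_0$ produces $-J(\pi)$, since $\pi$ and $\tilde{\pi}$ share the same initial distribution $\rho_0$. Combining the two parts yields $J(\tilde{\pi}) - J(\pi)$, as claimed.

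The main obstacle is technical rather than conceptual: I must justify interchanging the outer expectation $\mathbb{E}_{\tau\sim\tilde\pi}$ with the infinite sum and with the $T\to\infty$ limit used in the telescoping. Both rest on $\gamma\in[0,1)$ together with boundedness of $r$ and hence of $V_\pi$, which makes the series absolutely summable and lets me apply dominated convergence (or Fubini) to commute expectation, summation, and limit. The one subtlety that deserves explicit care is the tower-property step: the cancellation only works because the $\gamma^{t+1}V_\pi(s_{t+1})$ term generated at step $t$ is evaluated against the same conditional law of $s_{t+1}$ as the $-\gamma^{t+1}V_\pi(s_{t+1})$ term appearing at step $t+1$, and this consistency is exactly what sampling $\tau$ from $\tilde{\pi}$ under the true kernel $P$ guarantees.
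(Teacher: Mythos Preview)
Your proof is correct and is exactly the standard telescoping argument. Note, however, that the paper does not supply its own proof of this lemma: it is stated as a cited result from \cite{schulman2015trust} (itself going back to \cite{kakade2002approximately}) and is used as a black box in the proof of Theorem~\ref{theorem:monotonic_improvement}. The argument you give---expand $A_\pi(s,a)=r(s)+\gamma\,\mathbb{E}_{s'}[V_\pi(s')]-V_\pi(s)$, telescope the value terms to $-V_\pi(s_0)$, and identify $\mathbb{E}_{s_0\sim\rho_0}[V_\pi(s_0)]=J(\pi)$---is precisely the proof that appears in those references, so there is nothing to compare.
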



Given that $\pi$, $\tilde{\pi}$ are $\alpha$-coupled policies, define $A^{\tilde{\pi}| \pi}(s) = \mathbb{E}_{\tilde{a}\sim \tilde{\pi}} A_{\pi}(s,\tilde{a}) $ we have following result. 
\begin{lemma}[Lemma 4 in \cite{kang2018policy}]\label{lemma:kang}
    let $(\pi, \tilde{\pi})$ be an $\alpha$ - coupled policy pair. Then 
    $$ |\mathbb{E}_{s_t \sim \tilde{\pi}}[A^{\tilde{\pi}|\pi}(s_t)| \leq 2\alpha(1-(1-\alpha)^t)\max_{s,a} |A_\pi(s,a)|   $$
\end{lemma}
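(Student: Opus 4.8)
The plan is to establish Lemma~\ref{lemma:kang} by a trajectory-coupling argument in the spirit of the appendix of \citet{schulman2015trust}, refined as in \citet{kang2018policy}, resting on two ingredients already available: the pointwise advantage-gap bound of the preceding lemma, $|A^{\tilde\pi|\pi}(s)| \le 2\alpha \max_{s,a}|A_\pi(s,a)|$, and the zero-mean identity $\mathbb{E}_{a\sim\pi(\cdot|s)}[A_\pi(s,a)] = 0$.

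First I would pin down the correct reading of the left-hand side. Because $\mathbb{E}_{a\sim\pi(\cdot|s)}[A_\pi(s,a)] = 0$ for every $s$, the surrogate contribution $\mathbb{E}_{s_t\sim\pi}[\,\mathbb{E}_{a\sim\pi}A_\pi(s_t,a)\,] = 0$ is free to subtract, so I treat $\mathbb{E}_{s_t\sim\tilde\pi}[A^{\tilde\pi|\pi}(s_t)]$ as the discrepancy of the functional $A^{\tilde\pi|\pi}$ evaluated under the $\tilde\pi$-induced versus the $\pi$-induced time-$t$ state distribution. This is the only reading compatible with the factor $1-(1-\alpha)^t$, which vanishes at $t=0$: the initial distribution $\rho_0$ is policy-independent, so the two state distributions coincide at $t=0$ and the discrepancy is exactly zero there.

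Then I would build the coupling. Running $\pi$ and $\tilde\pi$ on shared randomness from a common $s_0\sim\rho_0$, at each step I draw an action pair from the optimal coupling of $\pi(\cdot|s)$ and $\tilde\pi(\cdot|s)$, which disagrees with probability at most $\alpha$; feeding both policies the same transition noise, agreement of the action makes the successor states agree as well. Let $n_t$ count disagreements among steps $0,\dots,t-1$; then agreement throughout the prefix has probability at least $(1-\alpha)^t$, i.e. $\Pr(n_t\ge 1)\le 1-(1-\alpha)^t$. I would split the coupled expectation $\mathbb{E}[A^{\tilde\pi|\pi}(\bar s_t) - A^{\tilde\pi|\pi}(s_t)]$ on $\{n_t=0\}$ versus $\{n_t\ge 1\}$: on $\{n_t=0\}$ the coupled states coincide ($\bar s_t = s_t$) and the integrand is identically zero, so the whole discrepancy is supported on $\{n_t\ge 1\}$, where I invoke the pointwise bound. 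Combining $\Pr(n_t\ge 1)\le 1-(1-\alpha)^t$ with $|A^{\tilde\pi|\pi}|\le 2\alpha\max|A_\pi|$ yields the claimed inequality.

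The main obstacle is not the coupling construction but securing the stated constant $2\alpha$ rather than the $4\alpha$ that a crude two-sided triangle inequality on $|A^{\tilde\pi|\pi}(\bar s_t)| + |A^{\tilde\pi|\pi}(s_t)|$ would produce. Getting the sharper factor is exactly the refinement of \citet{kang2018policy}: one must account for the residual on $\{n_t\ge 1\}$ as a single deviation of magnitude $2\alpha\max|A_\pi|$—using that the two conditional state laws are themselves close and that $A^{\tilde\pi|\pi}$ is already an $O(\alpha)$ quantity with zero $\pi$-mean—rather than bounding the two terms independently. I would also double-check the edge bookkeeping (that the disagreement count runs over steps $0,\dots,t-1$, so the exponent is $t$ and the bound vanishes at $t=0$), and defer the discount weighting $\gamma^t$ from Lemma~\ref{lemma:kakade} until these per-$t$ estimates are summed when assembling the global bound.
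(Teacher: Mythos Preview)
The paper does not supply its own proof of this lemma; it is quoted from \citet{kang2018policy} and used as a black box in the proof of Theorem~\ref{theorem:monotonic_improvement}. So there is no paper-side argument to compare against, and your coupling plan is indeed the natural route for an independent proof.

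That said, there is a genuine gap in your reduction step. You subtract the zero $\mathbb{E}_{s_t\sim\pi}\bigl[\mathbb{E}_{a\sim\pi}A_\pi(s_t,a)\bigr]$ and then assert this lets you treat $\mathbb{E}_{s_t\sim\tilde\pi}[A^{\tilde\pi|\pi}(s_t)]$ as the discrepancy $\mathbb{E}_{s_t\sim\tilde\pi}[A^{\tilde\pi|\pi}(s_t)] - \mathbb{E}_{s_t\sim\pi}[A^{\tilde\pi|\pi}(s_t)]$. But these are two different subtractions: the term that vanishes has the \emph{inner} action drawn from $\pi$, i.e.\ it is $\mathbb{E}_{s_t\sim\pi}[A^{\pi|\pi}(s_t)]$, whereas the discrepancy you go on to analyze via $\mathbb{E}[A^{\tilde\pi|\pi}(\bar s_t) - A^{\tilde\pi|\pi}(s_t)]$ requires subtracting $\mathbb{E}_{s_t\sim\pi}[A^{\tilde\pi|\pi}(s_t)]$, with the inner action from $\tilde\pi$. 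The latter is \emph{not} zero in general---it is an $O(\alpha\epsilon)$ quantity by the preceding pointwise lemma---so you have not actually shown the original expression equals the discrepancy you bound. Your own $t=0$ sanity check exposes this: under the discrepancy reading the left-hand side vanishes at $t=0$, but under the literal reading $\mathbb{E}_{s_0\sim\rho_0}[A^{\tilde\pi|\pi}(s_0)]$ need not vanish. The two readings are genuinely different, and the zero you subtract does not bridge them.

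The coupling machinery you describe is correct for the \emph{difference} $\mathbb{E}_{s_t\sim\tilde\pi}[A^{\tilde\pi|\pi}(s_t)] - \mathbb{E}_{s_t\sim\pi}[A^{\tilde\pi|\pi}(s_t)]$, and does deliver the factor $1-(1-\alpha)^t$ times a sup of $|A^{\tilde\pi|\pi}|$. If the lemma is really intended as a bound on that difference (which the vanishing at $t=0$ strongly suggests, and how the TRPO appendix phrases its analogue), then the right move is to state this explicitly rather than to smuggle it in via a mismatched zero-subtraction; if it is meant literally as written, the bound is false at $t=0$ and no proof will recover it, so you should flag a likely transcription issue in the statement. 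On the constant: your sketch of how $4\alpha$ sharpens to $2\alpha$ (``account for the residual as a single deviation'') is not yet a proof---you should either write down the concrete inequality that saves the factor of two, or verify in the original \citet{kang2018policy} argument whether the improvement comes from a different decomposition rather than from tightening the same two-term triangle estimate.
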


Now we are ready to prove our theorem \ref{theorem:monotonic_improvement}.

\begin{proof}

Let's assume that the policy pairs $(\pi, \tilde{\pi})$ and $(\pi_{\text{pef}}, \tilde{\pi})$ are $\alpha$-coupled and $\beta$-coupled, respectively. This can be established if their total variation distances are bounded, i.e., $D_{TV}(\pi, \tilde{\pi}) \leq \alpha$ and $D_{TV}(\pi_{\text{pef}}, \tilde{\pi}) \leq \beta$. Our goal is to find a lower bound for the performance improvement $J(\tilde{\pi}) - L_{\pi}(\tilde{\pi})$. We proceed as follows:
    \begin{equation}
\begin{aligned}
        &J(\tilde{\pi}) - L_{\pi}(\tilde{\pi}) \\\overset{a}{=} & \mathbb{E}_{s_t \sim \tilde{\pi}}[\sum_{t=0}^{\infty} \gamma^t A^{\tilde{\pi}|\pi}(s_t)] - \mathbb{E}_{s_t \sim \pi} [\sum_{t=0}^{\infty} \gamma^t A^{\tilde{\pi}|\pi}(s_t)] \\
        = & \sum_{t=0}^{\infty} \big(\mathbb{E}_{s_t\sim\tilde{\pi}} [\gamma^t A^{\tilde{\pi}|\pi}(s_t)] - \mathbb{E}_{s_t\sim\pi_{pef}}[\gamma^t A^{\pi_{pef}|\pi}(s_t)] + \mathbb{E}_{s_t\sim\pi_{pef}} [\gamma^t A^{\pi_{pef}|\pi}(s_t)] - \mathbb{E}_{s_t\sim\pi}(\gamma^t A^{\tilde{\pi}|\pi}(s_t))\big) \\
        \overset{b}{=} & (\eta(\tilde{\pi}) - \eta(\pi)) - (\eta(\pi_{pef}) - \eta(\pi)) + \mathbb{E}_{s_t\sim\pi_{pef}} [\gamma^t A^{\pi_{pef}|\pi}(s_t)] - \mathbb{E}_{s_t\sim\pi}[\gamma^t A^{\tilde{\pi}|\pi}(s_t)]\\
         = & \eta(\tilde{\pi}) - \eta(\pi_{pef}) + \mathbb{E}_{s_t\sim\pi_{pef}} [\gamma^t A^{\pi_{pef}|\pi}(s_t)] - \mathbb{E}_{s_t\sim\pi}[\gamma^t A^{\tilde{\pi}|\pi}(s_t)]\\
         \overset{c}{=} & \sum_{t=0}^{\infty} \big( \mathbb{E}_{s_t\sim\tilde{\pi}} [\gamma^t A^{\tilde{\pi}|\pi_{pef}}(s_t)] +  \mathbb{E}_{s_t\sim\pi_{pef}} [\gamma^t A^{\pi_{pef}|\pi}(s_t)] - \mathbb{E}_{s_t\sim\pi}[\gamma^t A^{\tilde{\pi}|\pi}(s_t)]\big) \\
         \overset{d}{\geq} & \sum_{t=0}^{\infty} \big( -2\gamma^t \beta(1-(1-\beta)^t) \max_{(s,a)}|A_{\pi_{pef}}(s,a)|  + \gamma^t  \delta -  2\gamma^t \alpha(1-(1-\alpha)^t) \max_{(s,a)}|A_{\pi}(s,a)|  \big)\\
         = & -\frac{2\beta^2 \gamma \epsilon_p}{(1-\gamma)(1-\gamma(1-\beta))} - \frac{2\alpha^2 \gamma \epsilon_\pi}{(1-\gamma)(1-\gamma(1-\alpha))} + \frac{ \delta}{1-\gamma} \\
         \geq & -\frac{2\gamma(\beta^2\epsilon_p + \alpha^2 \epsilon_{\pi})}{(1-\gamma)^2} + \frac{\delta}{1-\gamma}
\end{aligned}
\end{equation}
where equality $a$, $b$, $c$ holds from Lemma \ref{lemma:kakade}; inequality $d$ applies Lemma \ref{lemma:kang}.

The final result is obtained by using the fact that $(1-\gamma(1-x)) > (1-\gamma)$ for $x \in (0,1)$, which simplifies the denominator.

The theorem's full statement is established by using Lemma \ref{lemma:pinsker_bounds} to connect the KL-divergence bounds to the total variation distances ($D_{TV}$), which in turn provides the conditions for the policy pairs to be $\alpha$-coupled and $\beta$-coupled.

\end{proof}

\end{document}